\def\ie{{\em i.e.}}
\def\eg{{\em e.g.}}
\newcommand{\figref}[1]{Fig. \ref{#1}}
\newcommand{\tabref}[1]{Table \ref{#1}}
\newcommand{\secref}[1]{Section \ref{#1}}
\newcommand{\myPara}[1]{\vspace{.05in}\noindent\textbf{#1}}
\newcommand{\mc}[1]{\mathcal{#1}}
\newcommand{\mb}[1]{\mathbb{#1}}
\newcommand{\tabincell}[2]{\begin{tabular}{@{}#1@{}}#2\end{tabular}}
\begin{document}

\title{Temporal Action Localization with Enhanced Instant Discriminability
}


\author{Dingfeng Shi\textsuperscript{1}, 
        Qiong Cao\textsuperscript{2},
        Yujie Zhong\textsuperscript{3},
        Shan An\textsuperscript{2},
        Jian Cheng\textsuperscript{1},
        Haogang Zhu\textsuperscript{1,4},
        Dacheng Tao\textsuperscript{5}
}

\authorrunning{Dingfeng Shi, et al.} 

\institute{$\ast$~This~work~was~conducted~during~an~internship~at~JD.com.\\
\Letter~Corresponding~authors.\\
\Letter~Qiong~Cao~(mathqiong2012@gmail.com),\\
\Letter~Haogang Zhu~(haogangzhu@buaa.edu.cn),\\
\Letter~Jian~Cheng~(jian\_cheng@buaa.edu.cn).\\
\textsuperscript{1} Beihang University. \\
\textsuperscript{2} JD.com. \\
\textsuperscript{3} Meituran Inc. \\
\textsuperscript{4} Zhongguancun Laboratory. \\
\textsuperscript{5} University of Sydney. \\
}

\date{Received: date / Accepted: date}

\maketitle

\begin{abstract}
Temporal action detection (TAD) aims to detect all action boundaries and their corresponding categories in an untrimmed video. The unclear boundaries of actions in videos often result in imprecise predictions of action boundaries by existing methods. To resolve this issue, we propose a one-stage framework named TriDet. First, we propose a Trident-head to model the action boundary via an estimated relative probability distribution around the boundary. Then, we analyze the rank-loss problem (\ie~instant discriminability deterioration) in transformer-based methods and propose an efficient scalable-granularity perception (SGP) layer to mitigate this issue. 
To further push the limit of instant discriminability in the video backbone, we leverage the strong representation capability of pretrained large models and investigate their performance on TAD. Last, considering the adequate spatial-temporal context for classification, we design a decoupled feature pyramid network with separate feature pyramids to incorporate rich spatial context from the large model for localization.
Experimental results demonstrate the robustness of TriDet and its state-of-the-art performance on multiple TAD datasets, including hierarchical (multilabel) TAD datasets.
\keywords{Temporal action detection \and Video understanding \and Transformer \and Convolutional network}
\end{abstract}

\section{Introduction}
\label{sec:introduction}
Temporal action detection (TAD) is a critical task in video understanding that consists of two subtasks: action recognition and action localization. In the deep learning era, the two mainstream methods, CNN-based methods and transformer-based methods, have made impressive progress in TAD. 
However, several unsolved problems in TAD make it a challenging task.

In object detection, the majority of objects have clear boundaries, such as outlines, which make them relatively easy to detect. However, the lack of clear boundaries is a significant issue in TAD. For instance, it can be challenging to pinpoint the exact frame that marks the boundary at the end of a long jump~\citep{alwassel2018diagnosing}. This issue makes accurate localization in TAD challenging. 

Existing methods address the problem from two main perspectives. Some studies~\citep{lin2018bsn,lin2019bmn,zeng2019graph,zhao2020bottom,long2019gaussian,nag2022multi,fu2023semantic} aim to determine the boundaries of action by relying on global features, possibly missing detailed information at each instant. Meanwhile, other studies directly predict boundaries based on a single local feature~\citep{zhang2022actionformer,paul2018w}, potentially with some other features~\citep{lin2021learning,qing2021temporal,zhao2021video}, but they do not consider the relation between adjacent instants around the boundary. 

To enhance localization learning, we posit that the relative response intensity of temporal features in a video can mitigate the impact of video feature complexity and increase localization accuracy. Based on this idea, we propose a one-stage action detector with a novel detection head called the Trident-head that is designed for action boundary localization. Instead of directly predicting the boundary offsets based on a single local feature, the proposed Trident-head models the action boundary via an estimated relative probability distribution of the boundary. The boundary offset is then computed based on the expected values of the neighboring bin set.

Furthermore, the transformer-based feature pyramid is utilized in several recent TAD methods~\citep{zhang2022actionformer,cheng2022tallformer,weng2022efficient}, demonstrating encouraging outcomes. However, the video backbone's video features often exhibit significant similarities among snippets, which are further amplified by self-attention, leading to the rank-loss problem (\ie~discriminability deterioration)~\citep{shi2023tridet}.  
Fortunately, the success of the previous transformer-based layers in TAD relies primarily on their macro-architecture, namely, how the normalization layer and feed-forward network (FFN) are connected, rather than the self-attention mechanism.
We, therefore, propose an efficient convolutional-based layer, termed the scalable-granularity perception (SGP) layer, to alleviate the two 
abovementioned problems of self-attention. SGP comprises two primary branches, which serve to increase the discrimination of features in each instant and capture temporal information with different scales of receptive fields.

Additionally, most existing TAD methods utilize an action classification network (\eg~SlowFast\citep{feichtenhofer2019slowfast}, I3D~\citep{carreira2017quo} and TSN~\citep{wang2018temporal}) that is pretrained on a single dataset as a backbone. Therefore, the features extracted from those backbones are often not sufficiently distinct for boundary localization. To further push the limit of discriminability in the video backbone, we leverage the large image and video models and improve the localization accuracy for the TAD task. 
Namely, we utilize two types of pretrained large models: \emph{temporal-level} (the simplified term for spatial-temporal-level) and \emph{spatial-level} backbones, respectively. 

The temporal-level backbone (\eg~VideoMAEv2\\\citep{wang2023videomaev2}) efficiently extracts a comprehensive representation within a specific temporal window, resulting in excellent detection performance. However, this representation is not precisely aligned with frame information, leading to potentially inaccurate localization. 
On the other hand, the visual context provided by the spatial-level backbone (e.g., DINOv2~\citep{oquab2023dinov2}, MoCov2~\citep{chen2020mocov2}), such as the appearance of specific objects (e.g., cigarettes in the smoking action), often determines the start and end of an action. Motivated by this, we propose a decoupled feature pyramid network (FPN) to fuse information from two backbone networks: VideoMAEv2 and DINOv2, which shows better results compared to other straightforward fusion methods. Concretely, we construct two separate feature pyramids based on the two backbones, namely, the temporal-level feature pyramid and the spatial-level feature pyramid. Then, the temporal-level feature pyramid is directly fed into the classification head, while both the temporal-level feature pyramid and spatial-level feature pyramid are combined through element-by-element summation along each level of the pyramid and fed into the localization head (\ie~Trident-head).
By fully leveraging the benefits of both types of backbone networks, the decoupled FPN aids in enhancing localization.

Experimental results on several conventional TAD and multilabel TAD datasets show that TriDet is a state-of-the-art action detector, and extensive ablation experiments demonstrate its robustness.

The CVPR 2023 version~\citep{shi2023tridet} investigates the rank-loss issue of the transformer in TAD and introduces the SGP layer and Trident-head for more precise localization. The extensions of this work include the following: (1) we conduct experiments and analyze the application of temporal-level and spatial-level visually pretrained large models in TAD; (2) we propose a straightforward yet comprehensive feature pyramid network (FPN) that incorporates spatial-level context; (3) we build a comprehensive model that can adapt to multilabel TAD tasks and provide further results on two multilabel detection datasets MultiTHUMOS and Charades; (4) we conduct detailed experiments to analyze different variants of TriDet; (5) The code will be released to 
\url{https://github.com/dingfengshi/tridetplus}

\section{Related Work}
\subsection{Temporal Action Detection}
Temporal action detection (TAD) is a critical task for video understanding that aims to detect all action segments along with their boundary location and classification from untrimmed video~\citep{lee2023decomposed,lee2020background}. In the deep learning era, existing temporal action detection methods can be divided into two categories: two-stage methods and one-stage methods. 

Two-stage methods~\citep{xu2020g,zeng2019graph,zhu2021enriching,sridhar2021class,qing2021temporal,nag2023difftad} comprise two independent networks: a proposal generation network and a classification network. Most previous works~\citep{lin2018bsn,lin2019bmn,lin2020fast,chen2022dcan,escorcia2016daps,liu2021multi} focus on the proposal generation phase. Specifically, some works~\citep{lin2019bmn,lin2018bsn,chen2022dcan} predict the probability of the action boundary and match the start and end instants densely based on the prediction score. Anchor-based methods~\citep{lin2020fast,escorcia2016daps} classify actions from specific anchor windows. However, two-stage methods are limited by high complexity and cannot be trained end-to-end. 

One-stage methods perform localization and classification with a single network. Some previous works~\citep{yang2020revisiting,lin2021learning,yang2022basictad,bhosale2023diffsed} build this hierarchical architecture based on a convolutional network (CNN). However, a performance gap remains between CNN-based and the latest TAD methods. 

\subsection{Object detection}
Object detection is a twin task of TAD. General focal loss~\citep{li2020generalized} transforms bounding box regression from learning a Dirac delta distribution to a general distribution function. They utilize multiple bins to predict boundaries. However, these bins do not directly correspond to actual image information and necessitate an additional loss function to aid in convergence. Some methods~\citep{howard2017mobilenets,chollet2017xception,liu2022convnet} use depth-wise convolution to model the network structure, and some branched designs~\citep{szegedy2017inception,hu2018squeeze} show high generalization ability. These approaches are enlightening for the architecture design of TAD.

\subsection{Transformer Based Method}
Inspired by the great success of the transformer in the field of machine translation and object detection, some recent  works~\citep{zhang2022actionformer,shi2022react,tan2021relaxed,cheng2022tallformer,liu2022end,liu2022empirical} adopt the attention mechanism in TAD to improve the detection performance. For example, some works\citep{tan2021relaxed,shi2022react,liu2022end} detect action with the DETR-like transformer-based decoder~\citep{carion2020end}, which models action segments as a set of learnable segments. Other works~\citep{zhang2022actionformer,cheng2022tallformer} extract a video representation with a transformer-based encoder. 

However, most of these methods are based on \emph{local} behavior.
Namely, they conduct attention operations only in a local window, which introduces inductive bias similar to that of CNN but with larger computational complexity and additional limitations.

In addition, Dong et al.~\citep{dong2021attention} analyze the pure self-attention operation that causes the token features to converge to a rank-1 matrix at a double exponential rate during initialization. This inspired us to analyze the issue in TAD from the perspective of feature rank. However, their analysis primarily focuses on the initialization phase and lacks an examination of specific applications and potential enhancements. We dive deeper into the rank-loss problem that emerges during the training of the TAD detector and propose an effective improvement scheme, the SGP layer, utilizing convolution.

\subsection{Large-scale Pretrained Model}
Recently, large-scale self-supervised pretrained models such as BERT~\citep{devlin2018bert} and GPT-4~\citep{openai2023gpt4} have shown impressive results in the field of neural language processing (NLP). The large models trained using massive amounts of data have significantly improved various downstream tasks.

Motivated by the success seen in NLP, some researchers have begun to build visual foundation models by pretraining large-scale visual models~\citep{bao2021beit,kirillov2023segment,chen2020simple,xie2022simmim,chen2020mocov2}, including image foundation models and video foundation models~\citep{feichtenhofer2022masked,lin2022frozen,wang2022bevt}.
For the video foundation model, VideoMAEv2~\citep{wang2023videomaev2} constructs a large model with billions of parameters to learn the temporal-visual representation of videos from multiple video datasets in a self-supervised manner. 
For the image foundation model, DINOv2~\citep{oquab2023dinov2} builds a large-scale curated image dataset for large ViT model training and distilling. 

Although these large models have demonstrated strong performance in certain downstream tasks, their utilization and specific impact on TAD have not been thoroughly investigated. In this study, we aim to examine the impact of two types of backbones in TAD and explore methods to fully harness the potential of both backbone networks.

\begin{figure*}[t]
    \centering{
    \includegraphics[width=0.95\linewidth]{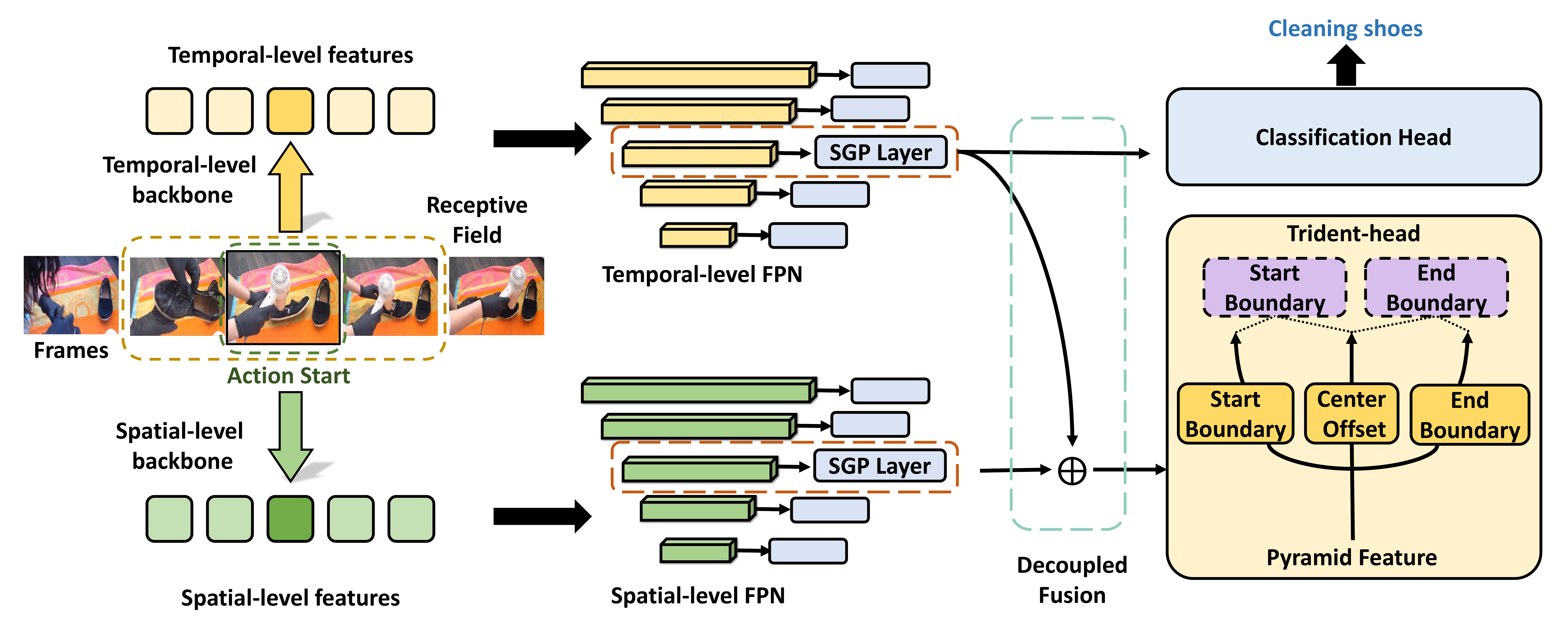}
    
  \caption{Illustration of TriDet. We adopt temporal-level and spatial-level backbones to construct pyramid features with the scalable-granularity perception (SGP) layer. The temporal-level features at each level are fed into a shared-weight classification head to determine the action categories. Additionally, the Trident-head takes in a fusion of temporal-level and spatial-level features to obtain the offset for each instant, and estimates the boundary offset based on the relative distribution predicted by three branches: start boundary, end boundary and center offset.}
  \label{fig:framework} 
  }
\end{figure*}

\section{Method}
\subsection{Preliminaries} 
\myPara{Problem definition.} We first give a formal definition of the TAD task. Specifically, given a set of untrimmed videos $\mc{D}=\{\mc{V}_i\}_{i=1}^{n}$, we have a set of temporal visual features $X_i =\{x_t\}_{t=1}^{T_i}$ from each video $\mc{V}_i$, where $T_i$ corresponds to the number of instants, and $K_i$ segment labels $Y_i=\{s_k,e_k,c_k\}_{k=1}^{K_i}$ with the action segment start instant $s_k$, the end instant $e_k$ and the corresponding action category $c_k$. TAD aims to detect all segments $Y_i$ based on the input feature $X_i$. 

\subsection{Method Overview}
Our goal is to build a simple and efficient one-stage temporal action detector. 
As shown in \figref{fig:framework}, the overall architecture of our detector consists of two branch feature backbones (temporal-level and/or spatial-level backbones). These backbones are accompanied by two corresponding feature pyramids, which undergo decoupled fusion and are fed into a classification head as well as a boundary-oriented Trident-head.

First, the video features are extracted using a pretrained temporal-level backbone (\eg~ VideoMAEv2\\\citep{wang2023videomaev2} or SlowFast~\citep{feichtenhofer2019slowfast}) or pretrained spatial-level backbone (\eg{~DINOv2~\citep{oquab2023dinov2}}). Next, the SGP feature pyramid for each backbone is built to tackle actions with various temporal lengths, similar to some recent TAD works~\citep{lin2021learning,zhang2022actionformer,cheng2022tallformer}.
Namely, the two backbone features are iteratively downsampled, and each scale level is processed with a proposed SGP layer to enhance the interaction between features with different temporal receptive fields,  resulting in the temporal-level feature pyramid and spatial-level feature pyramid.
Then, the temporal-level feature pyramid is fed directly into
the classification head, while both the temporal-level feature pyramid
and spatial-level feature pyramid are combined through element-by-
element summation along each level of the pyramid and fed
into the Trident-head.

Last, action segments are detected by a designed boundary-oriented Trident-head. 
We elaborate on the proposed modules in the following. 

\subsection{Feature Pyramid with SGP Layer}
\label{sec:sgp}
\myPara{The feature pyramid network (FPN).} The feature pyramid is obtained by first downsampling the output features of the video backbone network several times via max-pooling (with a stride of 2). The features at each pyramid level are then processed using transformer-like layers (e.g. ActionFormer~\citep{zhang2022actionformer}). 

\begin{figure}[t]
    \centering{
    \includegraphics[width=0.87\linewidth]{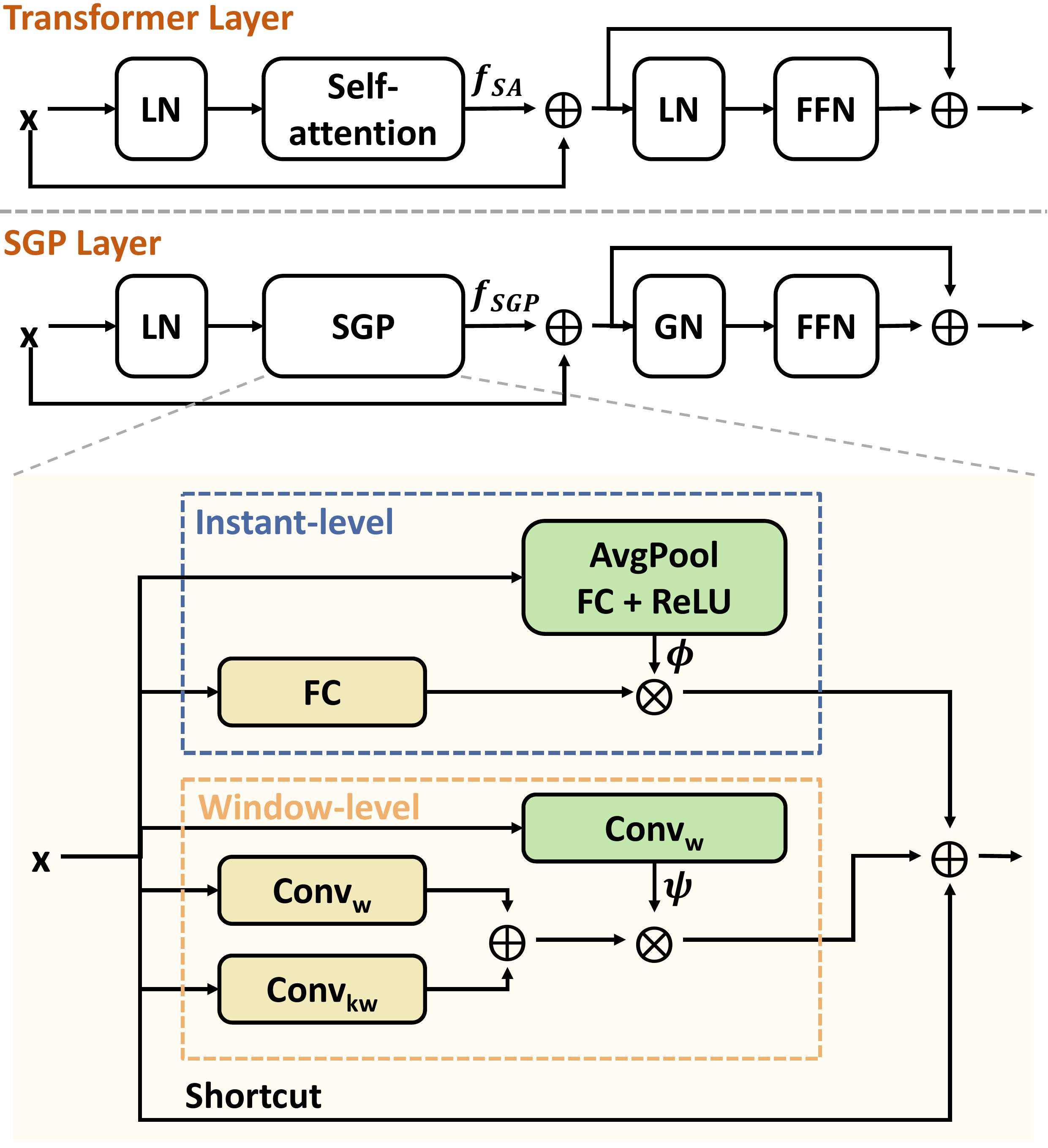}
    
  \caption{Illustration of the structure of the SGP layer. We replace the self-attention and the second layer normalization (LN) with SGP and group normalization (GN), respectively. }
  \label{fig:sgp}
  }
\end{figure}

Current transformer-based methods for TAD tasks rely primarily on the macro-architecture of the transformer (See \secref{sec:transformer} for details), rather than the self-attention mechanisms. Specifically, SA encounters two main issues: the rank-loss problem across the temporal dimension and high computational overhead.

\myPara{Limitation 1: the rank-loss problem.}
In~\citep{dong2021attention}, the authors examine the pure self-attention operation, which leads to the convergence of the token feature matrix to a rank-1 matrix at a double exponential rate during the initialization phase. This means that the feature sequences become more similar and less distinguishable with depth, which is referred to as the rank-loss problem.
We have also observed the occurrence of rank loss during the training phase in the TAD task (see \secref{sec:rank_loss}). The temporal feature sequences extracted from the pretrained backbone network already show a high level of similarity, which makes it challenging to detect boundaries. However, the self-attention approach exacerbates this similarity issue and is detrimental to accurate localization.

We argue that the rank-loss problem in TAD arises because the probability matrix in self-attention (\ie~soft-max($QK^T$)) is \emph{nonnegative} and \emph{the sum of each row is 1}, indicating the outputs of SA are a \emph{convex combination} of the value feature $V$. Considering that pure layer normalization~\citep{ba2016layer} projects features onto the hypersphere in high-dimensional space, we analyze the degree of their distinguishability by studying the maximum angle between features within the instant features. We demonstrate that the maximum angle between features after the \emph{convex combination} is less than or equal to that of the input features, resulting in increased similarity between features (as outlined in the supplementary material). 
A straightforward approach to mitigate this problem could be the replacement of the self-attention mechanism with a convolutional layer, given that it does not impose any constraints on the weights of its convolutional kernel. However, employing this approach leads to a performance drop (see SA-to-CNN in \secref{sec:transformer}), which can be attributed to the limited fitting power of a single convolutional layer. Thus, it still necessitates innovative designs to enhance the performance of convolutional structures.


\myPara{Limitation 2: high computational complexity.}
In addition, the dense pair-wise calculation (between instant features) in self-attention brings a high computational overhead and therefore decreases the inference speed.

Based on the above discovery, we propose an SGP layer to effectively capture multigranularity action information while suppressing the issues of rank loss and high computation complexity. The major difference between the transformer layer and the SGP layer is the replacement of the self-attention module with the fully convolutional module SGP. The successive layer normalization\citep{ba2016layer} (LN) is also changed to group normalization\citep{wu2018group} (GN).

As shown in \figref{fig:sgp}, SGP contains two main branches: an instant-level branch and a window-level branch. 
In the instant-level branch, we interact the features of each instant with the global average features in order to capture the overall video context. The window-level branch is designed to introduce the semantic content from a wider receptive field with a branch $\psi$ to help dynamically focus on the features of each scale.
Mathematically, given the temporal feature $X \in \mc{R}^{T \times D}$, the SGP can be written as:
\begin{equation}
    f_{SGP} = \phi(X)FC(X) + \psi(X)(Conv_w(X) + Conv_{kw}(X))+ X,
\end{equation}
where $FC$ and $Conv_w$ denote a fully-connected layer for each instant and a 1-D depth-wise convolution layer~\citep{chollet2017xception} over the temporal dimension with window size $w$, respectively.
As a signature design of SGP, $k$ is a scalable factor that captures a larger granularity of temporal information.
The video-level average feature $\phi(X)$ and branch $\psi(X)$ are given as
\begin{align}
    \phi(X) &= ReLU(FC(AvgPool(X))),\\
    \psi(X) &= Conv_w(X),
\end{align}
where $AvgPool(X) \in \mc{R}^{1 \times D}$ denotes the average pooling operation applied to all features along the temporal dimension. Here, both $\phi(X)$  (which will be replicated to $\mc{R}^{T \times D}$) and $\psi(X)\in \mc{R}^{T \times D}$ perform element-wise multiplication with the mainstream feature.

The resultant SGP-based feature pyramid can achieve better performance than the transformer-based feature pyramid while being much more efficient.

\begin{figure}[t]
    \centering{
    \includegraphics[width=0.90\linewidth]{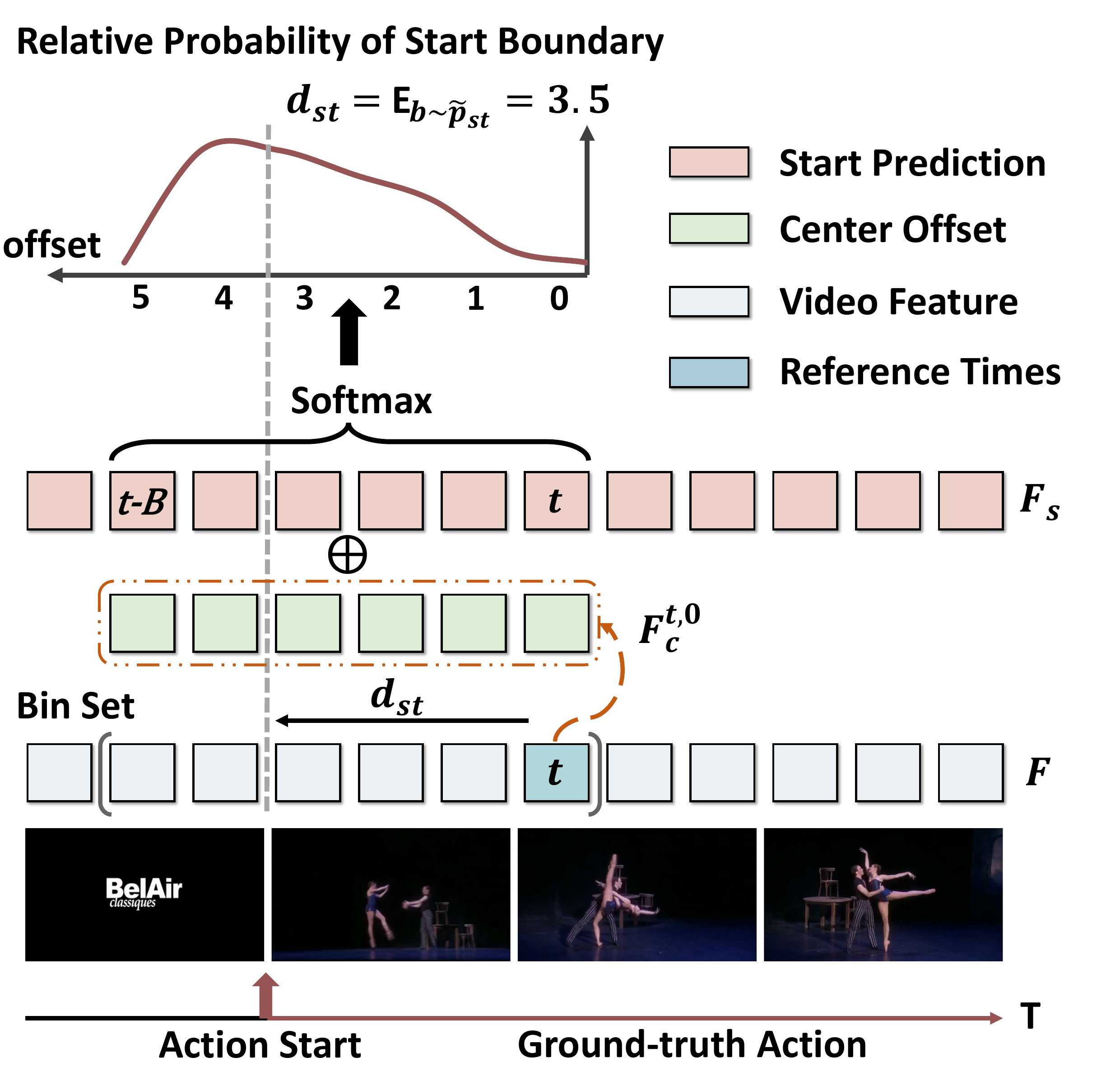}
    
  \caption{The boundary localization mechanism of Trident-head. We predict the boundary response and the center offset for each instant. At instant t, the predicted boundary response in the neighboring bin set is summed element-wise with the center offset corresponding to instant t, which is further estimated as the relative boundary distribution.
  Finally, the offset is computed based on the expected value of the bin.
  }
  \label{fig:head} 
  }
\end{figure}

\subsection{Trident-head with Relative Boundary Modeling}
\label{sec:head}

\myPara{Intrinsic property of action boundaries.}
Regarding the detection head, some existing methods directly regress the temporal length~\citep{zhang2022actionformer} of the action at each instant of the feature and refine with the boundary feature\citep{lin2021learning,qing2021temporal}, or ~\citep{lin2018bsn,lin2019bmn,zeng2019graph} simply predict an \emph{actionness} score (indicating the probability of being an action). These simple strategies suffer from a problem in practice: imprecise boundary predictions, due to the intrinsic property of actions in videos. Namely, the boundaries of actions are typically not obvious, unlike the boundaries of objects in object detection. Intuitively, a statistical boundary localization method can reduce uncertainty and facilitate more precise boundaries.

\myPara{Trident-head.}
In this work, we propose a boundary-oriented Trident-head to precisely locate action boundaries based on  relative boundary modeling, \ie~considering the relation of features in a certain neighboring bin set and obtaining the relative probability of being a boundary for each instant in that set. The Trident-head consists of three components, namely, a start head, an end head, and a center-offset head, which are designed to locate the start boundary and end boundary and to capture the localization context from the center instant of the action, respectively. The Trident-head can be trained end-to-end with the detector.

Concretely, as shown in \figref{fig:head}, given a sequence of features $F\in\mc{R}^{T \times D}$ output from the feature pyramid, we first obtain three feature sequences with the convolutional layer from the three branches, namely, start prediction $F_{s} \in \mc{R}^{T}$, end prediction $F_{e} \in \mc{R}^{T}$ and center offset $F_{c} \in \mc{R}^{T \times 2 \times (B+1)}$, where $B$ is the number of bins for boundary prediction, $F_{s}$ and $F_{e}$ characterize the response value for each instant as the starting or ending point of the action, respectively. The center offset $F_{c}$ aims to estimate two conditional distributions $P(b_{st}|t)$ and $P(b_{et}|t)$, which represent the probability that each instant $b_{st}/b_{et}$ (in start/end bin set) serves as a boundary when the instant $t$ is the midpoint of an action. Then, we model the boundary distance by combining the outputs of the boundary head and center-offset head:
\begin{align}
    \widetilde{P}_{st} &= Softmax(F_{s}^{[(t-B):t]} + F_{c}^{t,0}),\\
    d_{st} &= \mb{E}_{b\sim \widetilde{P}_{st}}[b] \approx \sum_{b=0}^B(b\widetilde{P}_{stb}),
\end{align}
where $F_{s}^{[(t-B):t]} \in \mc{R}^{B+1}$, $F_{c}^{t,0} \in \mc{R}^{B+1}$ are the response values in the start bin set of instant $t$ and the center offsets predicted by instant $t$ only, respectively, and $\widetilde{P}_{st}$ is the \emph{relative probability}, which represents the probability of each instant being the start of an action within the bin set. Then, the distance between instant $t$ and the start instant of action segment $d_{st}$ is given by the expectation of the adjacent bin set. Similarly, the offset distance of the end boundary $d_{et}$ can be obtained by
\begin{align}
    \widetilde{P}_{et} &= Softmax(F_{e}^{[t:(t+B)]} + F_{c}^{t,1}),\\
    d_{et} &= \mb{E}_{b\sim \widetilde{P}_{et}}[b] \approx \sum_{b=0}^B(b\widetilde{P}_{etb\textbf{}})
\end{align}

All heads are simply modeled in three layers of convolutional networks and share parameters at all feature pyramid levels to reduce the number of parameters.

\myPara{Combination with feature pyramid.} 
We apply the Trident-head in a predefined local bin set, which can be further improved by combining it with the feature pyramid.
In this setting, features at each level of the feature pyramid  share the same small number of bins $B$ (\eg~16), and  the corresponding prediction for each level $l$ can be scaled by $2^{l-1}$, which can significantly help to stabilize the training process. 

Formally, for an instant in the l-th feature level $t^l$,  SGP estimates the boundary distance $\hat{d}_{st}^l$ and $\hat{d}_{et}^l$ with the Trident-head described above; then, the segments $a=(\hat{s}_t, \hat{e}_t)$ can be decoded by 
\begin{align}
    \hat{s}_t &= (t-\hat{d}_{st}^l)\times 2^{l-1},\\
    \hat{e}_t &= (t+\hat{d}_{et}^l)\times 2^{l-1}.
\end{align}

\myPara{Extension to multilabel task.} It would be easy to adapt the Trident-head to a multilabel temporal action detection task by setting the outputs of the three branches to be category-dependent and assigning the positive sample positions and their corresponding regression value according to the category.

Concretely, the three branches in Trident-head are $F_{s} \in \mc{R}^{C\times T}$, $F_{e} \in \mc{R}^{C\times T}$ and $F_{c} \in \mc{R}^{C \times T \times 2 \times (B+1)}$. The final offset distance for each instant $d \in \mc{R}^{C\times T \times 2}$ is calculated in the same way as before. 

During the training phase, we employ central sampling to select classification and regression samples for each ground truth segment and allocate them to their respective instants and categories for regression. For classification, we adopt the multilabel binary objective and use focal loss~\citep{lin2017focal} to optimize the classification head. 

In the test, we select predicted segments based on the classification scores and use the corresponding predicted offset to decode the predicted segments. 

\myPara{Comparison with existing methods that have explicit boundary modeling.} Many previous methods improve boundary predictions. We divide them into two broad categories: prediction based on sampling instants in segments~\citep{lin2019bmn,liu2022end,shi2022react} and prediction based on a single instant. The first category predicts the boundary according to the global features of the predicted segments.
These methods consider only global information instead of detailed information at each instant. The second category directly predicts the distance between an instant and its corresponding boundary based on the spatial-level feature~\citep{lin2021learning,zhang2022actionformer,zhao2021video,qing2021temporal}. Some of these methods refine the segment with boundary features~\citep{lin2021learning,qing2021temporal,zhao2021video}.

However, they do not take the relation (\ie~relative probability of being a boundary) of adjacent instants into account.
The proposed Trident-head differs from these two categories and shows superior performance in precise boundary localization.

\subsection{Enhance TAD with Large-scale Pre-trained Models}
\subsubsection{Large-scale temporal-level backbone.}
\label{sec:videomae}
Traditional TAD tasks rely on temporal-level backbones to extract temporal feature sequences. However, the temporal features extracted from most of these backbones lack high distinguishability due to training from inadequate data. Fortunately, recent visual foundation models pretrained from massive amounts of data have shown impressive results in a variety of downstream tasks, inspiring us to utilize large models to enhance the performance in TAD.
Therefore, we utilize VideoMAEv2~\citep{wang2023videomaev2} as our temporal-level backbone. It is pretrained on the UnlabeledHybrid dataset and then fine-tuned on the K710 dataset.

\myPara{The advantages and disadvantages.} 
In \secref{sec:vmae_slowfast}, we analyze the advantages and disadvantages of VideoMAEv2 by comparing the detection results between VideoMAEv2 and SlowFast, a commonly used TAD backbone. Here, we present a simplified analysis.

VideoMAEv2 aims to learn a temporal-level representation from a short clip composed of multiple frames (\eg~ 16), and the representation benefits from a large model and a large amount of training data~\citep{wang2023videomaev2}. Therefore,  VideoMAEv2 has a greater advantage in detecting short action segments, which contain only a small number of clips and do not require much more aggregation of temporal features.

However, VideoMAEv2 covers a short range in the temporal dimension during training (no overlap sliding with a kernel size of 2); therefore, it captures less long-range information than does SlowFast, which uses both fast and slow branches for temporal feature extraction. When applied to TAD, VideoMAEv2 lacks the ability to detect long action segments.

In addition, temporal-level backbone networks often encounter the issue of imprecise alignment between their features and the corresponding spatial information at each instant. Instead, the features serve as a generalized representation within a particular temporal window. Moreover, the boundaries of certain actions are determined by the frames in which specific objects appear, thereby resulting in imprecise localization. To enhance the precision of localization, we introduce a spatial-level backbone into the framework.

\subsubsection{Large-scale spatial-level backbone.}
\label{sec:inst}
The aim of the spatial-level backbone is to enhance the localization of action boundaries that exhibit a strong correlation with the frame context, which is ignored by the temporal-level backbone.

To incorporate frame context into the temporal feature sequence, we propose a straightforward yet comprehensive feature pyramid that involves spatial-level visual methods in the detection process, aiding in precise localization. 

Concretely, we adopt the backbone of DINOv2~\citep{oquab2023dinov2} pretrained on the LVD-142M dataset and extract the output feature as the spatial representation for each instant. Then, we simply sample the spatial-level sequence with the same frame rate as the temporal-level sequence to match their sequence lengths. Next, we build feature pyramids with the SGP layer (w=1 for spatial-level backbone) and pooling for the temporal-level backbone and spatial-level backbone, respectively. 
Then, the temporal-level feature pyramid is directly fed into the classification head, while both the temporal-level feature pyramid and spatial-level feature pyramid are combined through element-by-element summation along each level of the pyramid and fed
into the Trident-head.

\begin{table*}[t]
\centering{
\caption{Comparison with the state-of-the-art methods on the HACS dataset.}
\label{table:hacs}
\setlength{\tabcolsep}{1.3mm}
\renewcommand{\arraystretch}{1.1}
\begin{tabular}{c | c |c |c  c c c c }
\toprule
Method & Venue & Backbone & 0.5 & 0.75 & 0.95 & Avg. \\
\midrule
SSN~\citep{SSN2017ICCV} & ICCV'2017& I3D & 28.8 & 18.8 & 5.3 &  19.0\\
LoFi~\citep{xu2021low} & NeurIPS'2021 & TSM & 37.8 & 24.4 & 7.3 & 24.6\\
G-TAD~\citep{xu2020g}& CVPR'2020 & I3D & 41.1 & 27.6 & 8.3 &  27.5\\
TadTR~\citep{liu2022end}& TIP'2022 & I3D & 47.1 & 32.1 & 10.9 & 32.1\\
BMN~\citep{lin2019bmn}& ICCV'2019 & SlowFast &  52.5 & 36.4 & 10.4 & 35.8\\
TALLFormer~\citep{cheng2022tallformer}& ECCV'2022 & Swin &  55.0 & 36.1 & 11.8 & 36.5\\
TCANet~\citep{qing2021temporal} & CVPR'2021 & SlowFast &  54.1 & 37.2 & 11.3 & 36.8\\
TCANet + BMN & CVPR'2021 & SlowFast & 55.6 & 40.0 & 11.5 & 38.7\\
ETAD~\citep{liu2023etad} & CVPRW'2023 & SlowFast & 55.7 & 39.0 & 13.8 & 38.8 \\
\midrule
\textbf{TriDet} & CVPR'2023 & I3D & 54.5 & 36.8 & 11.5 & 36.8\\
\textbf{TriDet} & CVPR'2023 & SlowFast  & 56.7 & 39.3 & 11.7 & 38.6\\
\textbf{TriDet} & - & DINOv2 & 52.4 & 33.6 & 8.4 & 33.7\\
\textbf{TriDet} & - & VideoMAEv2  & 62.4 & 44.1 & 13.1 & 43.1\\
\textbf{TriDet} & - & Fused &\textbf{ 63.0} & \textbf{44.5} & \textbf{12.9} & \textbf{43.4}\\
\bottomrule
\end{tabular}}
\end{table*}

Training and Inference
Each layer $l$ of the temporal-level feature pyramid or combined feature pyramid outputs a temporal feature $F^l \in \mc{R}^{(2^{l-1}T)\times D}$, which is then fed to the classification head or the Trident-head for action instance detection. The output of each instant $t$ in feature pyramid layer $l$ is denoted as $\hat{o}_{t}^l = (\hat{c}_{t}^l, \hat{d}_{st}^l, \hat{d}_{et}^l)$. 
The overall loss function is then defined as follows:
\begin{equation}
\begin{aligned}
\mc{L}&=\frac{1}{N_{pos}}\sum_{l,t}\mathbbm{1}_{\{c^l_t>0\}}(\sigma_{IoU}\mc{L}_{foc} + \mc{L}_{IoU})\\
&+ \frac{1}{N_{neg}}\sum_{l,t}\mathbbm{1}_{\{c^l_t=0\}}\mc{L}_{foc},
\end{aligned}
\end{equation}
where $\sigma_{IoU}$ represents the temporal IoU between the predicted segment and the ground truth action instance and $\mc{L}_{foc}$ and $\mc{L}_{IoU}$ are the focal loss~\citep{lin2017focal} and IoU loss~\citep{rezatofighi2019generalized}, respectively. $N_{pos}$ and $N_{neg}$ denote the number of positive and negative samples and $c_l^t$ is the classification label (0 for background).
The term $\sigma_{IoU}$ is used to reweight the classification loss at each instant, such that instants with better regression (\ie~of higher quality) contribute more to the training. 
Following previous methods~\citep{tian2019fcos,zhang2020bridging,zhang2022actionformer}, center sampling is adopted to determine the positive samples. Namely, the instants around the center of an action instance are labeled as positive, and all others are considered negative.

\myPara{Inference.} At inference time, the instants with classification scores higher than threshold $\lambda$ and their corresponding instances are kept. Last, Soft-NMS~\citep{bodla2017soft} is applied for the deduplication of predicted instances.

\section{Experiments}
\subsection{Datasets}
We conduct experiments on six challenging datasets, including two TAD task datasets: THUMOS14~\citep{THUMOS14} and HACS-Segment~\citep{zhao2019hacs} and a multilabel TAD task dataset MultiTHUMOS~\citep{yeung2015every}.

\myPara{TAD task datasets.} THUMOS14 consists of 20 sport action classes and contains 200 and 213 untrimmed videos with 3,007 and 3,358 action segments in the training set and test set, respectively. HACS is a large-scale dataset that contains 200 classes of action, which has 37,613 videos for training, as well as 5,981 videos for test.  

\myPara{Multilabel TAD task datasets.} MultiTHUMOS is a multilabel dataset that shares the same videos with the THUMOS14 dataset. It contains $38,690$ annotations of 65 action categories with an average of 1.5 labels per frame and 10.5 action classes per video.

\myPara{Evaluation.} For all these datasets, only the annotations of the training and validation sets are accessible. Following previous practice~\citep{lin2019bmn,zhang2022actionformer,cheng2022tallformer,zeng2019graph}, we evaluate on the validation set. We report the mean average precision (mAP) at different intersection over union (IoU) thresholds. For THUMOS14, we report IoU thresholds at [0.3:0.7:0.1]. For HACS, we report the result at the IoU threshold [0.5, 0.75, 0.95], and the average mAP is computed at [0.5:0.95:0.05]. For the multilabel datasets, we evaluate with detection-mAP instead of segmentation-mAP following the previous works~\citep{tan2022pointtad,tang2023temporalmaxer,shao2023action}. We report the average IoU with thresholds [0.1: 0.1: 0.9] for the MultiTHUMOS dataset.

\subsection{Implementation Details}
TriDet is trained end-to-end with the AdamW~\citep{loshchilov2018decoupled}
optimizer. The initial learning rate is set to $10^{-4}$ for THUMOS14 and MultiTHUMOS, and  $10^{-3}$ for HACS. We detach the gradient before the start boundary head and end boundary head and initialize the CNN weights of these two heads with a Gaussian distribution $\mc{N}(0, 0.1)$ to stabilize the training process. The learning rate is updated with the cosine annealing schedule~\citep{loshchilov2017sgdr}. We train for 40, 48, and 13 epochs for THUMOS14, MultiTHUMOS and HACS (including a warmup of 20, 20 and 10 epochs).

For HACS, the number of bins $B$ of the Trident-head is set to 14, the convolution window $w$ is set to 11, and the scale factor $k$ is set to 1.0. For THUMOS14, MultiTHUMOS the number of bins $B$ of the Trident-head is set to 16,  the convolution window $w$ is set to 1, and the scale factor $k$ is set to 1.5. We round the scaled window size and take it up to the nearest odd number for convenience. 

\subsection{Comparison with State-of-the-art Results}
\subsubsection{Single-label temporal action detection}
\myPara{HACS.} 
In our experiment, we utilized the largest dataset, HACS, and adopted three commonly used temporal-level backbones I3D, SlowFast, and VideoMAEv2 and spatial-level backbone DINOv2 to ensure a fair and comprehensive comparison. As shown in \tabref{table:hacs}, TriDet with VideoMAEv2 significantly outperforms the previous best method, with an average mAP margin of approximately 4.4\%. 
This improvement is attributed to the fact that insufficient learning during training leads to inadequate discriminative temporal features, yet VideoMAEv2 effectively addresses this problem by extensively pretraining on a large dataset. This makes the learned features more distinguishable, which greatly benefits the TAD task. 

In addition, through the fusion of the spatial-level backbone with DINOv2, the TriDet with fused FPN (denoted as TriDet-Fused) achieves a higher average mAP (43.4\%) than does VideoMAEv2 alone (43.1\%). This indicates that the spatial-level can further enhance the detection performance. We further analyze the results of the temporal-level backbone and spatial-level backbone in \secref{sec:dino_vmae}.

 \begin{table*}[t]
\centering{
\caption{Comparison with state-of-the-art methods on THUMOS14 dataset. *: reported in VideoMAEv2 paper~\citep{wang2023videomaev2}}
\label{table:thumos14}
\setlength{\tabcolsep}{1.3mm}
\begin{tabular}{c|c|c|c c c c c c }
\toprule
Method & Venue & Backbone & 0.3 & 0.4 & 0.5 & 0.6 & 0.7 & Avg.\\
\midrule
 BMN~\citep{lin2019bmn} & ICCV'2019 & TSN & 56.0 & 47.4 & 38.8 & 29.7 & 20.5 & 38.5 \\
 G-TAD~\citep{xu2020g} & CVPR'2020 & TSN  & 54.5 & 47.6 & 40.3 & 30.8 & 23.4 & 39.3 \\
 A2Net~\citep{yang2020revisiting}& TIP'2020 &I3D & 58.6 & 54.1 & 45.5 & 32.5 & 17.2 & 41.6 \\
 TCANet~\citep{qing2021temporal}& CVPR'2021 & TSN & 60.6 & 53.2 & 44.6 & 36.8 & 26.7 & 44.3 \\
 RTD-Net~\citep{tan2021relaxed}& ICCV'2021 &I3D & 68.3 & 62.3 & 51.9 & 38.8 & 23.7 & 49.0 \\
 VSGN~\citep{zhao2021video}& ICCV'2021 & TSN & 66.7 & 60.4 & 52.4 & 41.0 & 30.4 & 50.2 \\
 ContextLoc~\citep{zhu2021enriching}& ICCV'2021 &I3D & 68.3 & 63.8 & 54.3 & 41.8 & 26.2 & 50.9\\
AFSD~\citep{lin2021learning}& CVPR'2021 &I3D & 67.3 & 62.4 & 55.5 & 43.7 & 31.1 & 52.0 \\
ReAct~\citep{shi2022react}& ECCV'2022& TSN & 69.2 & 65.0 & 57.1 & 47.8 & 35.6 & 55.0 \\
TadTR~\citep{liu2022end}& TIP'2022 &I3D & 74.8 & 69.1 & 60.1 & 46.6 & 32.8 & 56.7\\
TALLFormer~\citep{cheng2022tallformer} & ECCV'2022 &Swin & 76.0 & - & 63.2 & - & 34.5 & 59.2\\
ActionFormer~\citep{zhang2022actionformer} & ECCV'2022 &I3D & 82.1 & 77.8 & 71.0 & 59.4 & 43.9 & 66.8\\
ActionFormer* & CVPR'2023 &VideoMAEv2 & 84.0 & 79.6 & 73.0 & 63.5 & 47.7 & 69.6\\
 \midrule
\textbf{TriDet}& CVPR'2023&I3D & 83.6 & 80.1 & 72.9 & 62.4 & 47.4 & 69.3\\
\textbf{TriDet}& -& DINOv2 & 67.9 & 62.2 & 53.2 & 41.8 & 27.7 & 50.6\\
\textbf{TriDet}& -&VideoMAEv2 & 84.8 & 80.0 & 73.3 & \textbf{63.8} & 48.8 & 70.1\\
\textbf{TriDet}& -& Fused & \textbf{85.5} & \textbf{80.7} & \textbf{73.9} & 62.9 & \textbf{\textbf{48.9}} & \textbf{70.4}\\
\bottomrule
\end{tabular}
}
\end{table*}

\myPara{THUMOS14.} We conduct the same comparison as on HACS. As \tabref{table:thumos14} shows, TriDet also achieved state-of-the-art performance using the I3D backbone, with an average mAP of up to 69.3\%, demonstrating the effectiveness of TriDet. VideoMAEv2 improves the average mAP by 0.7\%, and TriDet-Fused further boosts the performance of TriDet, reaching an average mAP of 70.4\%. These results are consistent with the results on HACS.

\begin{table*}[t]
\centering{
\caption{Comparison with the state-of-the-art methods on the MultiTHUMOS datasets.}
\label{table:multilabel}
\setlength{\tabcolsep}{1.6mm}
\renewcommand{\arraystretch}{1.1}
\begin{tabular}{c|c|c|c|c|cccc}
\toprule
\multirow{2}{*}{Method} & \multirow{2}{*}{Venue} & \multirow{2}{*}{Backbone} & \multirow{2}{*}{RGB} & \multirow{2}{*}{Flow} & \multicolumn{4}{c}{MultiTHUMOS} \\
\cline{6-9}
                        &                           &                      &                       & & 0.2    & 0.5    & 0.7   & Avg    \\
\midrule
PointTAD~\citep{tan2022pointtad} &NeurIPS’2022               & I3D                       &                      $\surd$&                       & 39.7   & 24.9   & 12.0  & 23.5  \\
ASL~\citep{shao2023action}&Arxiv’2023  & I3D                       &                      $\surd$&                       & 42.4   & 27.8   & 13.7  & 25.5 \\
TemporalMaxer~\citep{tang2023temporalmaxer}&Arxiv’2023           & I3D                       &                      $\surd$&                       & 47.5   & 33.4   & 17.4  & 29.9  \\
\midrule
\textbf{TriDet}                  & - & I3D                       &                      $\surd$&                       & 49.1   & 34.3   & 17.8   & 30.7  \\
\textbf{TriDet}                 & -  & I3D                       &                      $\surd$&$\surd$                       & 55.7   & 41.0   & 23.5   & 36.2  \\
\textbf{TriDet}                 & -  & VideoMAEv2                &                      $\surd$&                       & 57.7   & 42.7   & 24.3   & 37.5  \\
\textbf{Tridet}           & -  & Fused                &                      $\surd$&                       & 57.6   & \textbf{42.9}   & \textbf{25.0}   & \textbf{37.7}   \\
\bottomrule
\end{tabular}
}
\end{table*}

\subsubsection{Multilabel temporal action detection}
For the multilabel temporal action detection task, multiple levels of labels may exist at each instant. 
As \tabref{table:multilabel} shows, our method achieves impressive performance on the multilabel task and significantly surpasses all previous methods on the MultiTHUMOS. 
In addition, we perform supplementary experiments on MultiTHUMOS to assess the influence of two-stream backbones (I3D with RGB and optical flow as input) and larger models (VideoMAEv2 + DINOv2). Clearly, incorporating both optical flow backbones and large model backbones results in noteworthy enhancement. Both optical flow and VideoMAEv2 capture the motion dynamics across consecutive frames, and the findings demonstrate that utilizing motion dynamics can effectively enhance the detection performance in the multilabel TAD task.

\begin{table}[t]
\centering{
\caption{Analysis of the effectiveness of three main components on THUMOS14.}
\label{table:ablation}
\setlength{\tabcolsep}{1.0mm}
\renewcommand{\arraystretch}{1.2}
\scalebox{1}{
\begin{tabular}{c| c  c  c| c c c c}
\toprule
Method & SA & SGP & Trident & 0.3 & 0.5 & 0.7 & Avg. \\
\midrule
1 & &  &  &  77.3 & 65.2 & 40.0 & 62.1 \\
2 & $\surd$  &  & & 82.1 & 71.0 & 43.9 & 66.8 \\
3 & & $\surd$ & & \textbf{83.6} & 71.7 & 45.8 & 68.3 \\
4 & & $\surd$ & $\surd$  &\textbf{ 83.6}  & \textbf{72.9} & \textbf{47.4} & \textbf{69.3} \\
\bottomrule
\end{tabular}
}}
\end{table}

\subsection{Ablation on the Main Components}
We demonstrate the effectiveness of our proposed components in TriDet: SGP layer and Trident-head. To verify the effectiveness of our SGP layer, with the I3D backbone, we use a baseline feature pyramid used by~\citep{lin2021learning,zhang2022actionformer} to replace our SGP layer. The baseline consists of two 1D-convolutional layers and a shortcut. The window size of the convolutional layers is set to 3, and the number of channels of the intermediate features is set to the same dimension as the intermediate dimension in the FFN in our SGP layer. All other hyperparameters (\eg~number of the pyramid layers, etc.) are set the same as in our TriDet.

As depicted in \tabref{table:ablation}, compared with the baseline model we implement (Row 1), the SGP layer brings a $6.2\%$ absolute improvement in the average mAP. Second, we compare the SGP with the previous state-of-the-art method, ActionFormer, which adopts the self-attention mechanism in a sliding window \citep{beltagy2020longformer} with window size $7$ (Row 2). Our SGP layer still achieves a $1.5\%$ improvement in average mAP, demonstrating that the convolutional network has excellent performance in TAD. Furthermore, we compare our Trident-head with the normal spatial-level regression head, which regresses the boundary distance for each instant. Trident-head improves the average mAP by $1.0\%$, and the mAP improvement is more obvious in the case of high IoU threshold (\eg~$1.6\%$ average mAP improvement in IoU 0.7\%). 

\begin{figure}[t!]
    \centering{
    \includegraphics[width=0.9\linewidth]{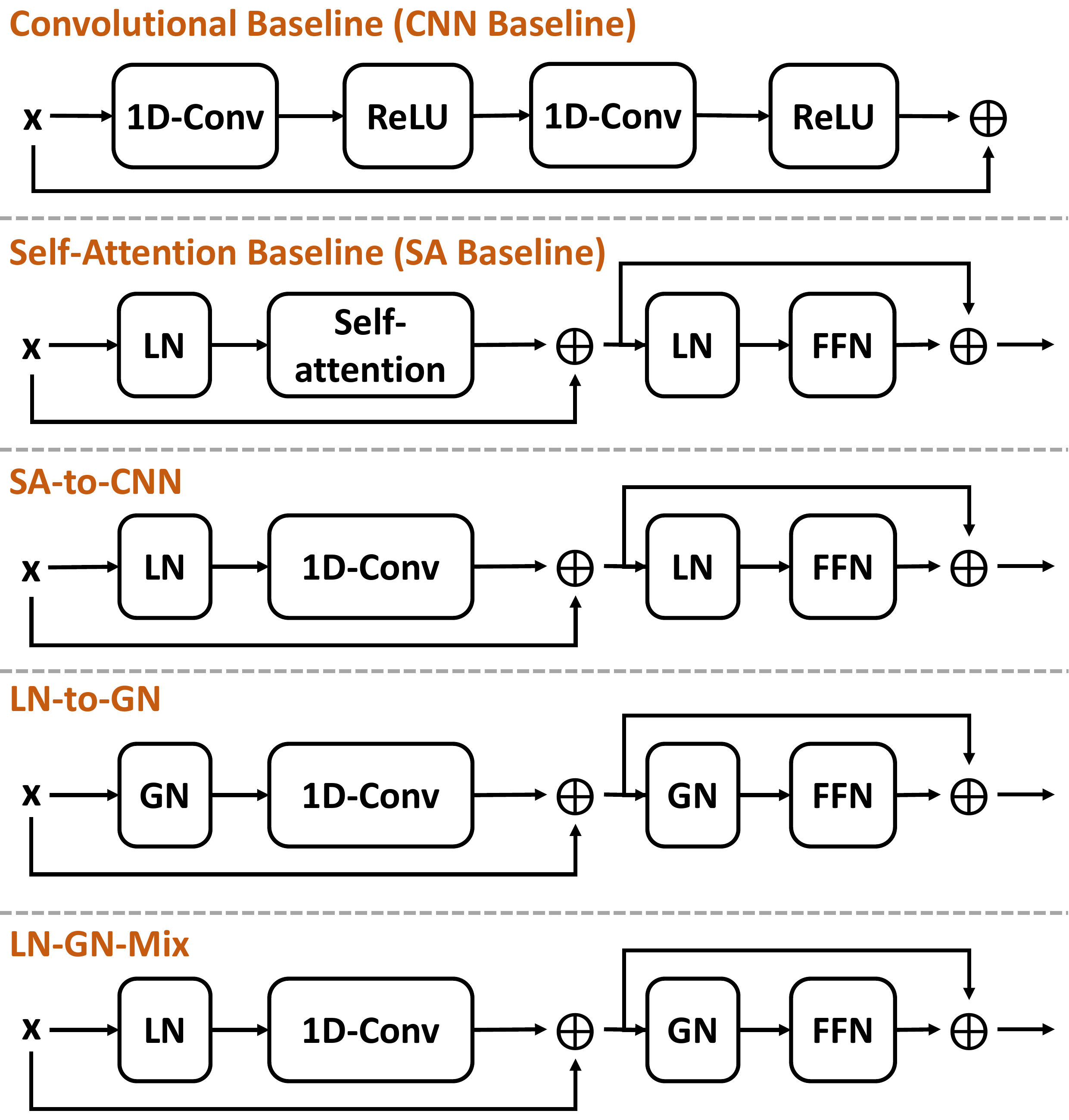}
    
  \caption{Two baseline models and three different variants of the convolutional-based structure. }
  \label{fig:self2cnn} 
  }
\end{figure}

\subsection{The Core Effectiveness of Transformer}
\label{sec:transformer}
As described in \secref{sec:introduction}, we claim that the previous best transformer-based method~\citep{zhang2022actionformer} primarily improved due to the macro-architecture of the transformer rather than the self-attention mechanism. To be self-contained, in this section, we further analyze the impact of module design on the detector.

\myPara{From Transformer to CNN.} As \figref{fig:self2cnn} shows, for comparison, we build two baseline models: a convolutional (CNN) baseline and a self-attention (SA) baseline. First, we build a CNN baseline in which the convolutional module is adopted from the previous one-stage detector~\citep{lin2021learning,zhang2022actionformer}. Second, the previous state-of-the-art detector~\citep{zhang2022actionformer} with local window self-attention~\citep{beltagy2020longformer} is chosen as the SA baseline. Then, to analyze the importance of two common components: self-attention and normalization, in the Transformer~\citep{vaswani2017attention} macro-architecture, we provide three variants of the convolutional-based structure: SA-to-CNN, LN-to-GN and LN-GN-Mix, and validate their performance on THUMOS14 with the I3D backbone. 

To verify the robustness and sensitivity of Trident-head across different architectures of transformer, we conducted additional comparisons of the three aforementioned variants with and without the inclusion of Trident-head.

\begin{table}[t]
\centering{
\caption{The results of different variants on THUMOS14. *: with Trident-head.}
\label{table:self2cnn}
\begin{tabular}{c|c c c c }
\toprule
Method & 0.3 & 0.5  & 0.7 & Avg.\\
\midrule
 CNN Baseline & 77.3  & 65.2  & 40.0 & 62.1 \\
 SA Baseline & 82.1  & 71.0  & 43.9 & 66.8 \\
 \midrule
 SA-to-CNN & 80.4  & 67.5  & 42.9 & 64.9 \\
 LN-to-GN & 80.0  & 68.0  & 42.3 & 64.8 \\
 LN-GN-Mix & 80.8  & 68.8  & 43.6 & 65.7 \\
 \midrule
  SA-to-CNN* & 81.2  & 68.7  & 43.5 & 65.7 \\
 LN-to-GN* & 80.7  & 69.1  & 42.2 & 65.4 \\
 LN-GN-Mix* & 81.6  & 69.5 & 42.9 & 66.0 \\
\bottomrule
\end{tabular}
}
\end{table}

\myPara{Results.}
From ~\tabref{table:self2cnn}, we can see a large performance gap between the SA baseline and the CNN baseline (approximately $4.7\%$ in average mAP), demonstrating that the transformer holds a large advantage for TAD tasks. 
Then, we conduct an ablation study with the three variants of the normal regression head.

We first simply replace the local self-attention with a 1D convolutional layer (SA-to-CNN) that has the same receptive field as that of~\citep{zhang2022actionformer} (\eg~kernel size is 19). This change yields a dramatic performance increase in terms of the average mAP compared with the CNN baseline (approximately $2.8\%$) but is still behind the transformer baseline by approximately $1.9\%$. Next, we conduct experiments with different normalization layers (\ie~layer normalization (LN)~\citep{ba2016layer} and group normalization (GN)~\citep{wu2018group} (\ie~LN-to-GN and LN-GN-Mix) and find that the hybrid structure of LN and GN (LN-GN-Mix) shows better performance compared to the SA baseline ($65.7\%$ versus $64.9\%$). 

By combining with the Trident-head, the results consistently remain robust for all three variants (\ie~the performance relationship remains unchanged), demonstrating that the structural improvement remains reliable under different regression heads. Additionally, the LN-GN-Mix version achieves an average mAP of $66.0\%$, showcasing the potential for efficient convolutional modeling. Furthermore, convolution relaxes the restriction on the weight distributions at each moment (i.e., the sum of the weights is not necessarily 1), avoiding the rank-loss problem. Hence, these empirical results further motivate us to enhance the feature pyramid with the SGP layer.

\begin{figure}[t]
    \centering{
    \includegraphics[width=0.85\linewidth]{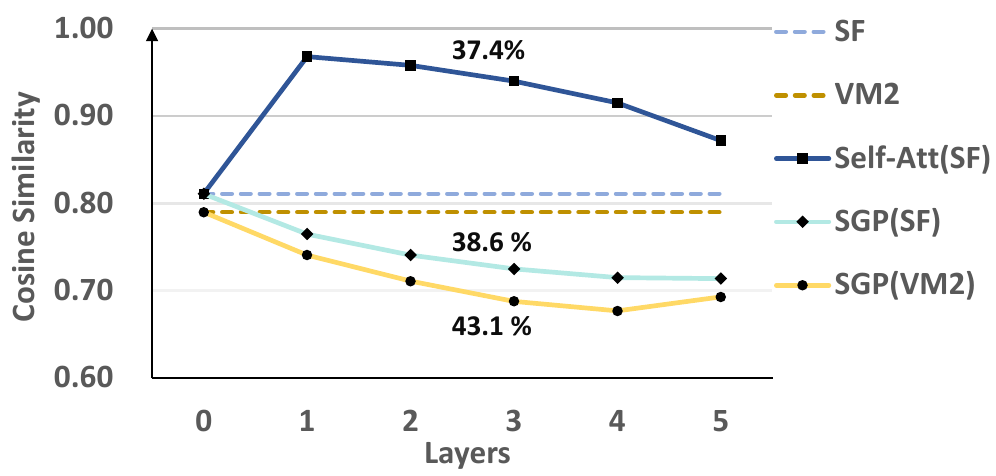}
    
  \caption{The statistical average cosine similarity and average mAP for self-attention (Self-Att) and SGP layer with SlowFast (SF) and VideoMAEv2 (VM2) backbones.
  }
  \label{fig:cosine} 
  }
\end{figure}

\subsection{Investigation of the Rank-Loss Problem}
\label{sec:rank_loss}
In this section, we validate the rank-loss problem in self-attention and the effectiveness of the SGP layer in preserving rank using the cosine similarity metric. The cosine similarity metric was chosen due to the fixed modulus property of the pure layer normalization.
\begin{property}[Fixed modulus property]
A pure Layer normalization $x'=LayerNorm(x)$ normalizes the temporal features $x\in\mc{R}^{n}$ to a fixed modulus $\sqrt{n-1}$
\end{property}
\begin{proof}
Consider the data in each dimension $x'_i$: 
\begin{equation}
\begin{aligned}
    x'_i=&\frac{x_i-mean(x)}{std(x)}\\
        =&\frac{x_i-mean(x)}{\sqrt{\frac{1}{n-1}\sum_{i\in n}{(x_i-mean(x))^2}}},
\end{aligned}    
\end{equation}
then we have 
\begin{equation}
\begin{aligned}
    ||x'||_2 = & \sqrt{\sum_{i\in n}{{x'}_i^{2}}}\\
             = & \sqrt{\frac{\sum_{i\in n}{(x_i-mean(x))^2}}{\frac{1}{n-1}\sum_{i\in n}{(x_i-mean(x))^2}}} \\
             = & \sqrt{n-1}.
\end{aligned}    
\end{equation}
\end{proof}

To simplify, we use cosine similarity  $S_c$ to measure the angular similarity between features at each
instant and the video-level average feature:
\begin{equation}
S_c=\frac{1}{T}\sum_{i\in T}{cos(x_i,\bar{x})}
\end{equation}
where $\bar{x}=\frac{1}{T}\sum_{i\in T}{x_i}$.

We assess the average cosine similarity for the self-attention and the SGP layer with SlowFast and VideoMAEv2 backbones on the HACS dataset and present the results in \figref{fig:cosine}. 

First, self-attention increases the similarity of the temporal feature sequence for each layer output, surpassing even the similarity of the temporal feature sequence directly predicted by the backbone. This suggests that TAD faces the rank-loss problem due to self-attention. Conversely, our SGP layer mitigates this issue and demonstrates more discriminative power (38.6\% vs 37.4\% in average mAP). Furthermore, the SGP layer presents similar characteristics across various backbone networks (\ie~SlowFast and VideoMAEv2), thereby highlighting its robustness.

Second, the VideoMAEv2 backbone generates features with lower cosine similarity compared to the SlowFast backbone. This suggests that visual models pretrained using abundant video data produce more distinct features in the temporal dimension.

\begin{table}[t]
\centering{
\caption{The effectiveness of different fusion policies and fusion position.}
\label{table:fusion}
\setlength{\tabcolsep}{1.1mm}
\renewcommand{\arraystretch}{1.3}
\begin{tabular}{|c|c|c|c|c|c|}
\hline
 & Policy           & mAP &  & Policy          & mAP \\ \hline 
\multirow{5}{*}{\shortstack{Fusion \\ policy\\(early fusion)}}                       & add              & 69.7       &       \multirow{5}{*}{\shortstack{Fusion\\ position\\(add)}}                  & early fusion & 69.7       \\ \cline{2-3} \cline{5-6} 
                       & concat           & 69.9       &                         & before SGP      & 69.9       \\ \cline{2-3} \cline{5-6} 
                       & cross-atten-T & 69.2       &                         & after SGP       & 70.6       \\ \cline{2-3} \cline{5-6} 
                       & cross-atten-S & 52.2       &                         & FPN decouple    & \textbf{71.2}       \\ \cline{2-3} \cline{5-6} 
                       & conv-atten       & \textbf{70.3}       &                         &                 &             \\ \hline
\end{tabular}}
\end{table}

\subsection{The Policies of Backbone Fusion}
\label{sec:fusion}
To explore an effective way to fuse features from temporal-level and spatial-level backbones, we conducted experiments using VideoMAEv2 and DINOv2 on THUMOS14, testing various fusion policies and fusion positions, and reporting their average mAP.

To recap, we decouple the FPN with the aim of improving localization precision and posit that spatial-level context is more beneficial for localization, while the classification of actions relies more on temporal-level context (motion information). To demonstrate the effectiveness of our decoupled fusion method, we conducted tests on the detection performance of multiple commonly used fusion methods, including both decoupled and coupled ones. By means of these tests, we illustrate the impact of these methods.

As shown in the left half of \tabref{table:fusion}, for the fusion policies, we report five typical methods and conduct the fusion after the embedding network (a convolutional layer is employed to embed the output feature from two backbone networks into the same dimension, resulting in \emph{early fusion}): (1) element-wise addition (\textit{add}); (2) concatenation (\textit{concat}); (3) local cross-attention (window size=5) with the temporal-level feature as value and spatial-level feature as query (\textit{cross-atten-T}); (4) local cross-attention (window size=5) with the spatial-level feature as value and temporal-level feature as query (\textit{cross-atten-S}); (5) predict an element-wise attention score with a convolutional layer from the concatenated features and weighted-sum of the two features (\textit{conv-atten}).

The \textit{conv-atten} policy outperforms other methods, but this method fails to fully utilize the spatial-level backbone features. One possible reason is that early fusion reduces the effectiveness of the spatial-level context, as it gets destroyed by convolution after being fed into the temporal-level feature pyramid. Therefore, we further study where fusion can achieve the best result.

Specifically, three positions are considered: (1) after the embedding network (\textit{early fusion}, which was mentioned in the previous paragraph; (2) before the SGP layer (\textit{before SGP}), we first downsample the spatial-level feature using max-pooling and then perform element-wise summation separately for the temporal-level feature and the spatial-level feature in each pyramid level; and (3) after the SGP layer (\textit{after SGP}), where the spatial-level features constructed by max-pooling are combined with the temporal features extracted by the SGP layer.

Upon analyzing \tabref{table:fusion} (right part), it was found that delaying fusion leads to a higher average mAP. This suggests that late fusion better preserves spatial-level context, thereby improving detection head prediction.

Based on this result, we further improve the \textit{after SGP} policy by decoupling the fusion process after the SGP layer, as detailed in \secref{sec:inst}. Decoupling the fusion improves the results significantly compared to those of other fusion methods (71.2\% vs 70.6\%). This demonstrates that separating the fusion process of the spatial-level and temporal-level backbone can greatly benefit the localization process, without causing any interference with the classification process.

\begin{figure}[t]
    \centering
    \subcaptionbox{TriDet (DINOv2)\label{fig:dino_sen}}{
        \includegraphics[width=\linewidth]{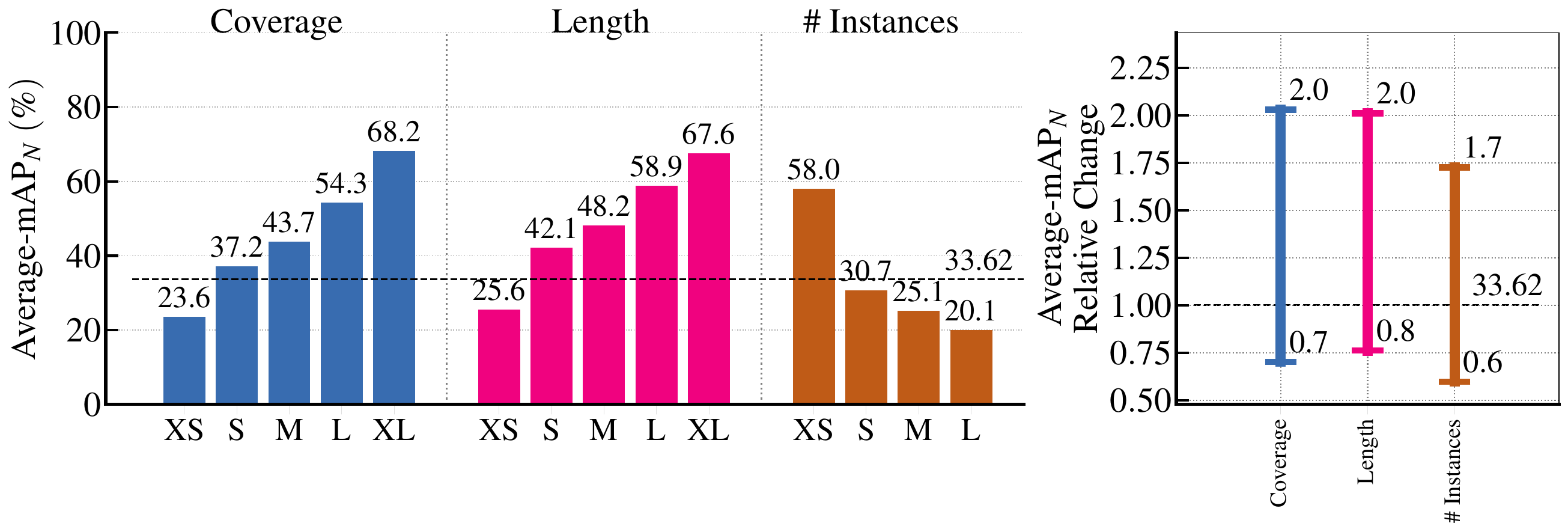}
    }
    \subcaptionbox{TriDet (SlowFast)\label{fig:sf_sen}}{
        \includegraphics[width=\linewidth]{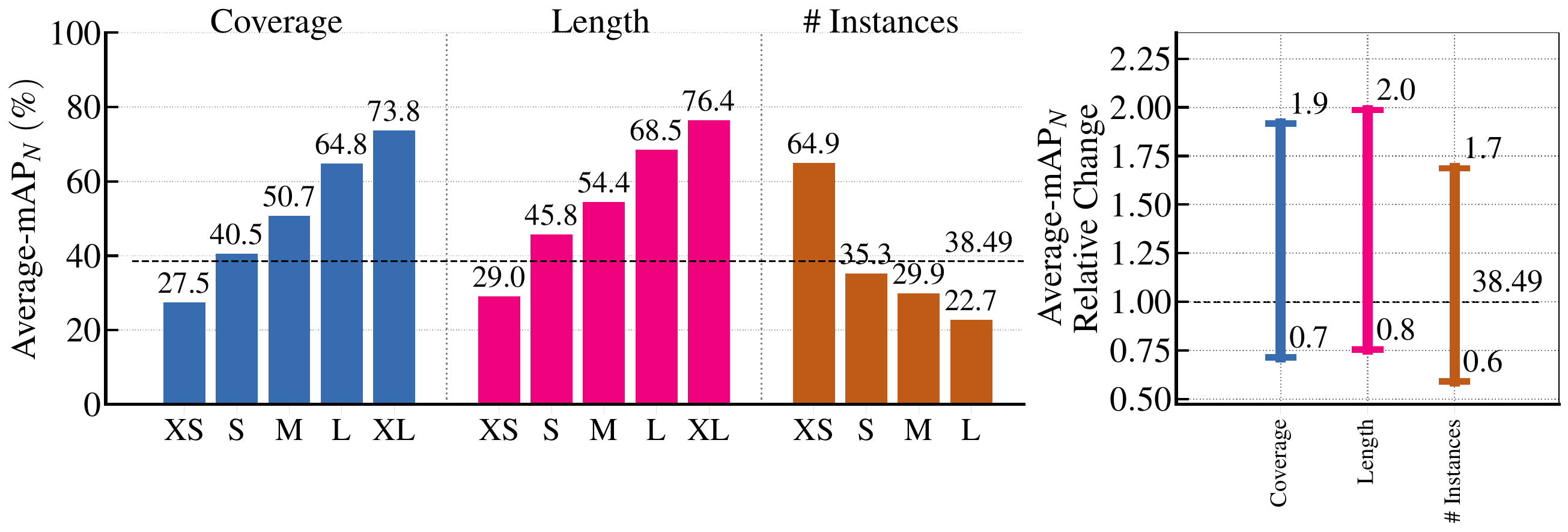}
    }
    
    \subcaptionbox{TriDet (VideoMAEv2)\label{fig:vmae_sen}}{
        \includegraphics[width=\linewidth]{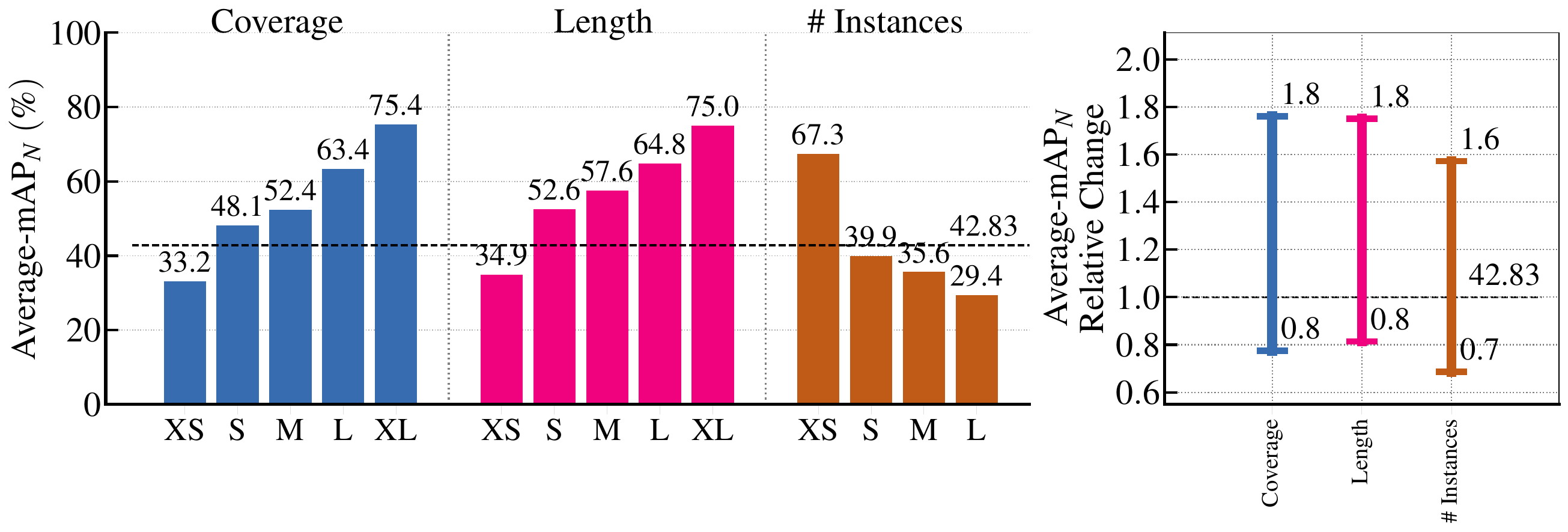}
    }
    \subcaptionbox{TriDet-Fused\label{fig:vmae_dino_sen}}{
        \includegraphics[width=\linewidth]{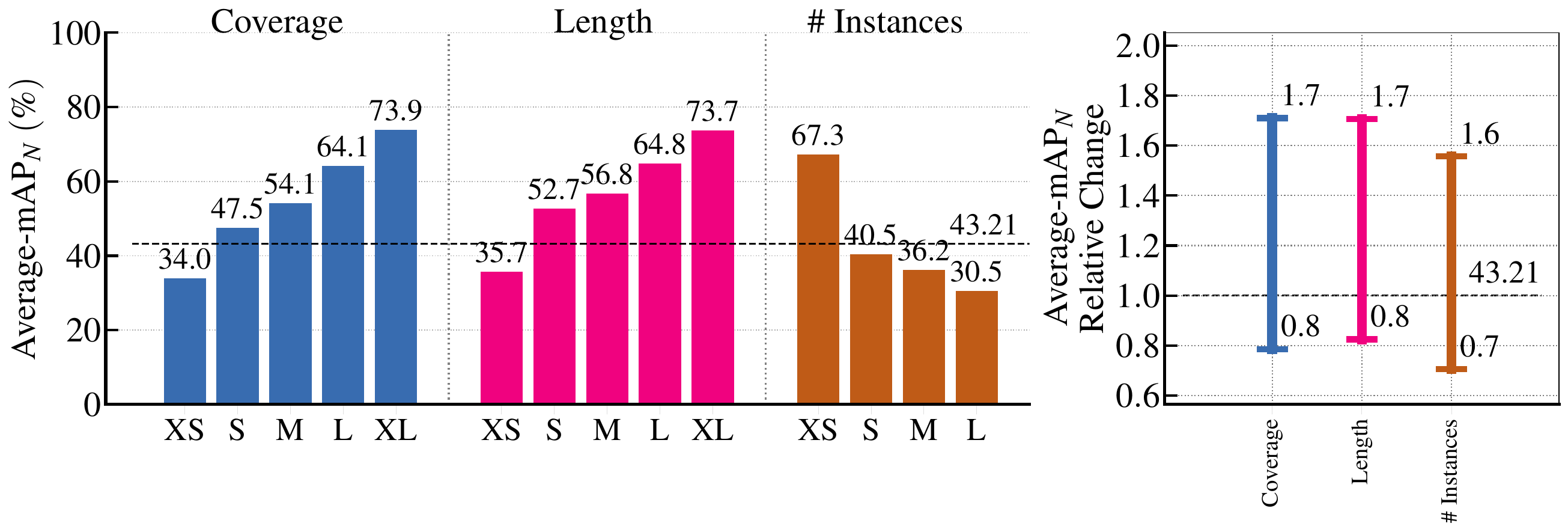}
    }

  \caption{Sensitivity analysis of the detection results, where $mAP_{N}$ is the normalized mAP with the average number $N$ of ground truth segments per class~\citep{alwassel2018diagnosing}.}
  \label{fig:sen}
\end{figure}

\begin{figure*}[t]
    \begin{minipage}[]{\textwidth}
    \centering
    \subcaptionbox{TriDet (DINOv2)\label{fig:dino_fn}}{
        \includegraphics[width=0.45\linewidth]{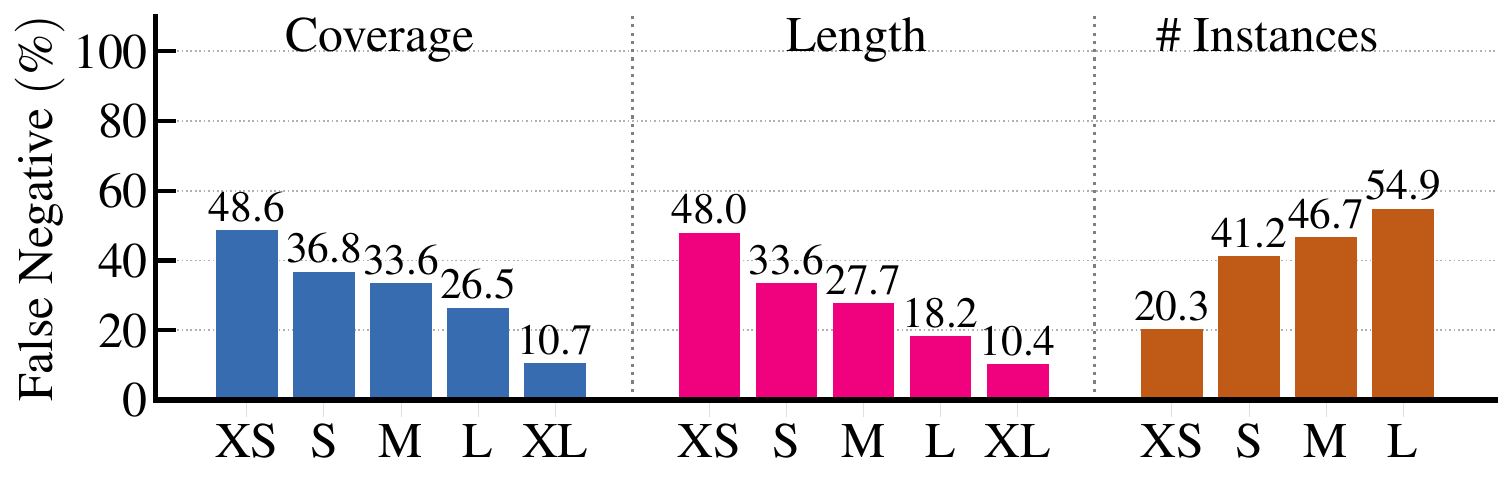}
    }
    \quad
    \subcaptionbox{TriDet (SlowFast)\label{fig:sf_fn}}{
        \includegraphics[width=0.45\linewidth]{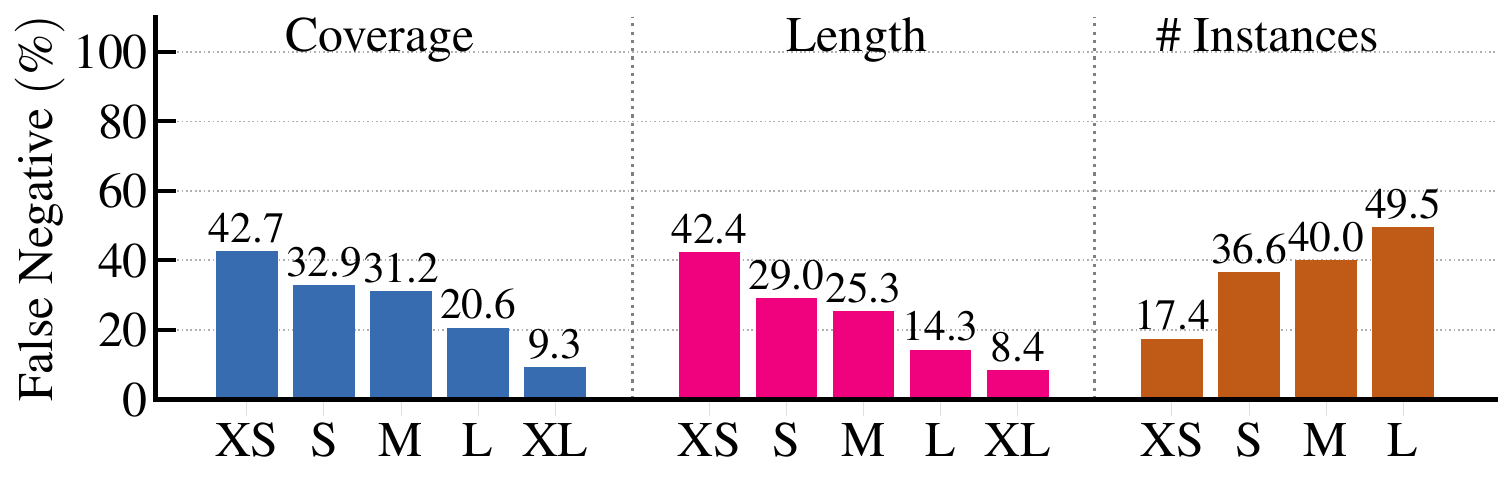}
    }
    
    \subcaptionbox{TriDet (VideoMAEv2)\label{fig:vmae_fn}}{
        \includegraphics[width=0.45\linewidth]{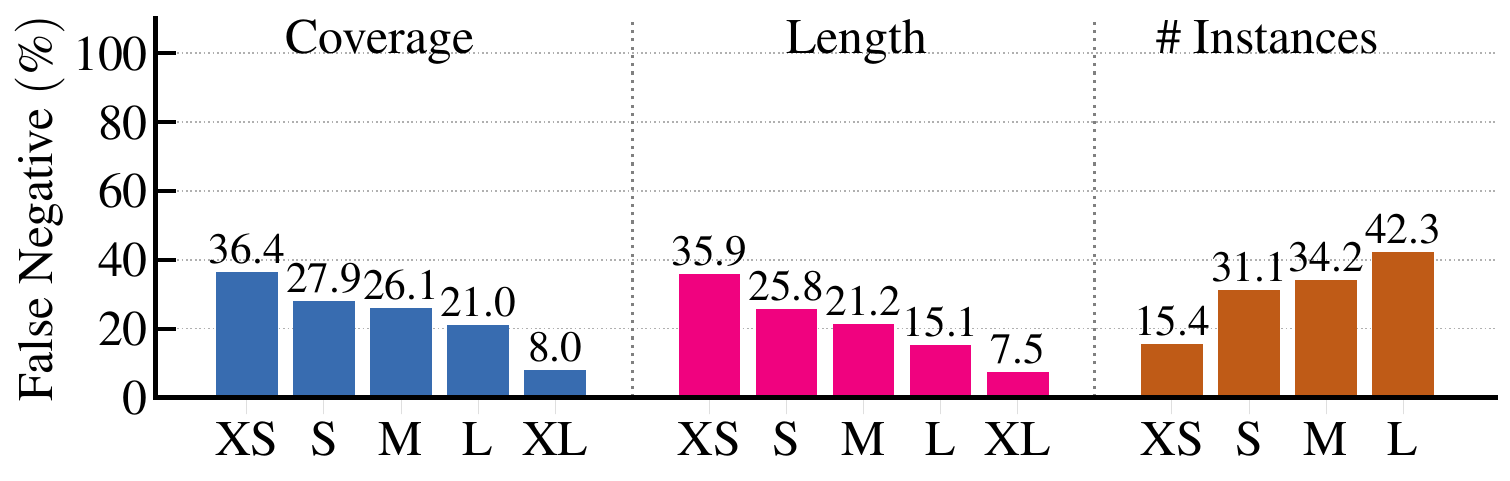}
    }
    \quad
    \subcaptionbox{TriDet-Fused\label{fig:vmae_dino_fn}}{
        \includegraphics[width=0.45\linewidth]{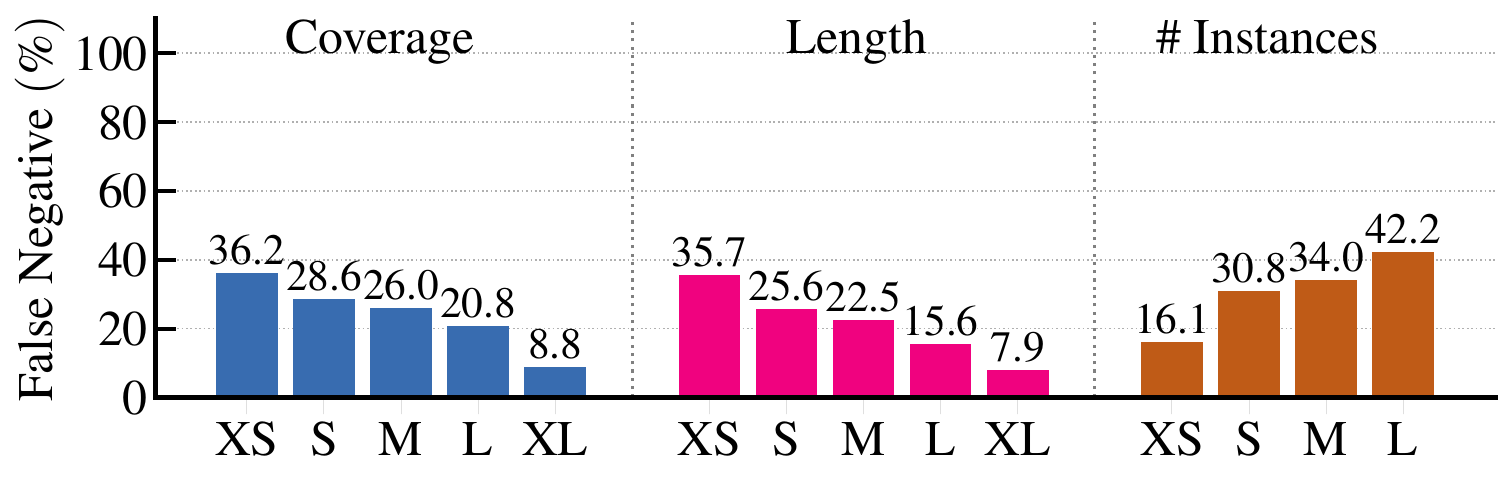}
    }

  \caption{The false negative analysis of the detection results, which counts the percentage of several common types of detection error in different Top-KG prediction groups, where G is the number of groundtruth segments.}
  \vspace{0.2cm}
  \label{fig:falsenegative}
  \end{minipage}
  \begin{minipage}[]{\textwidth}
  \centering
      \subcaptionbox{TriDet (DINOv2)\label{fig:dino_fp}}{
        \includegraphics[width=0.45\linewidth]{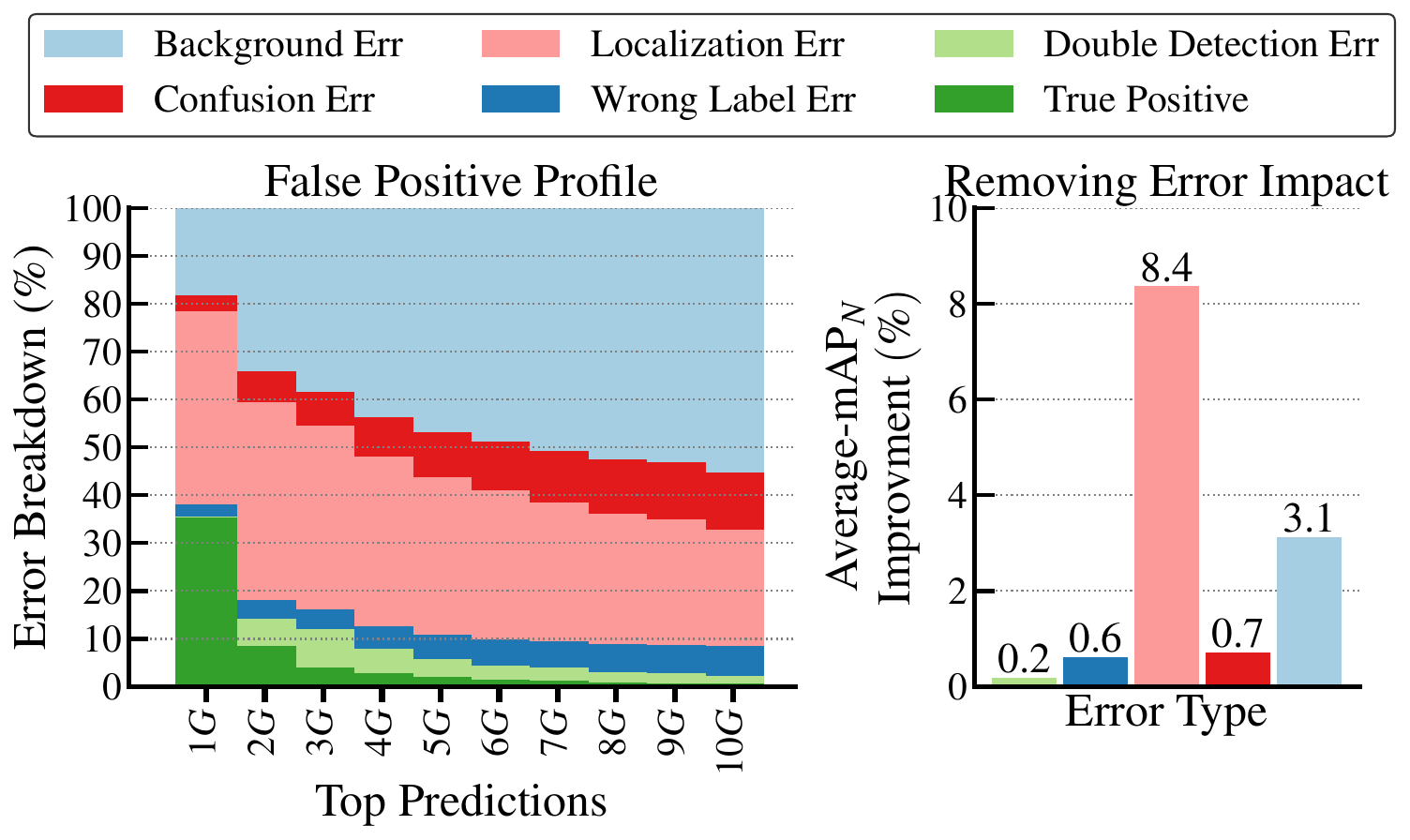}
    }
    \quad
    \subcaptionbox{TriDet (SlowFast)\label{fig:sf_fp}}{
        \includegraphics[width=0.45\linewidth]{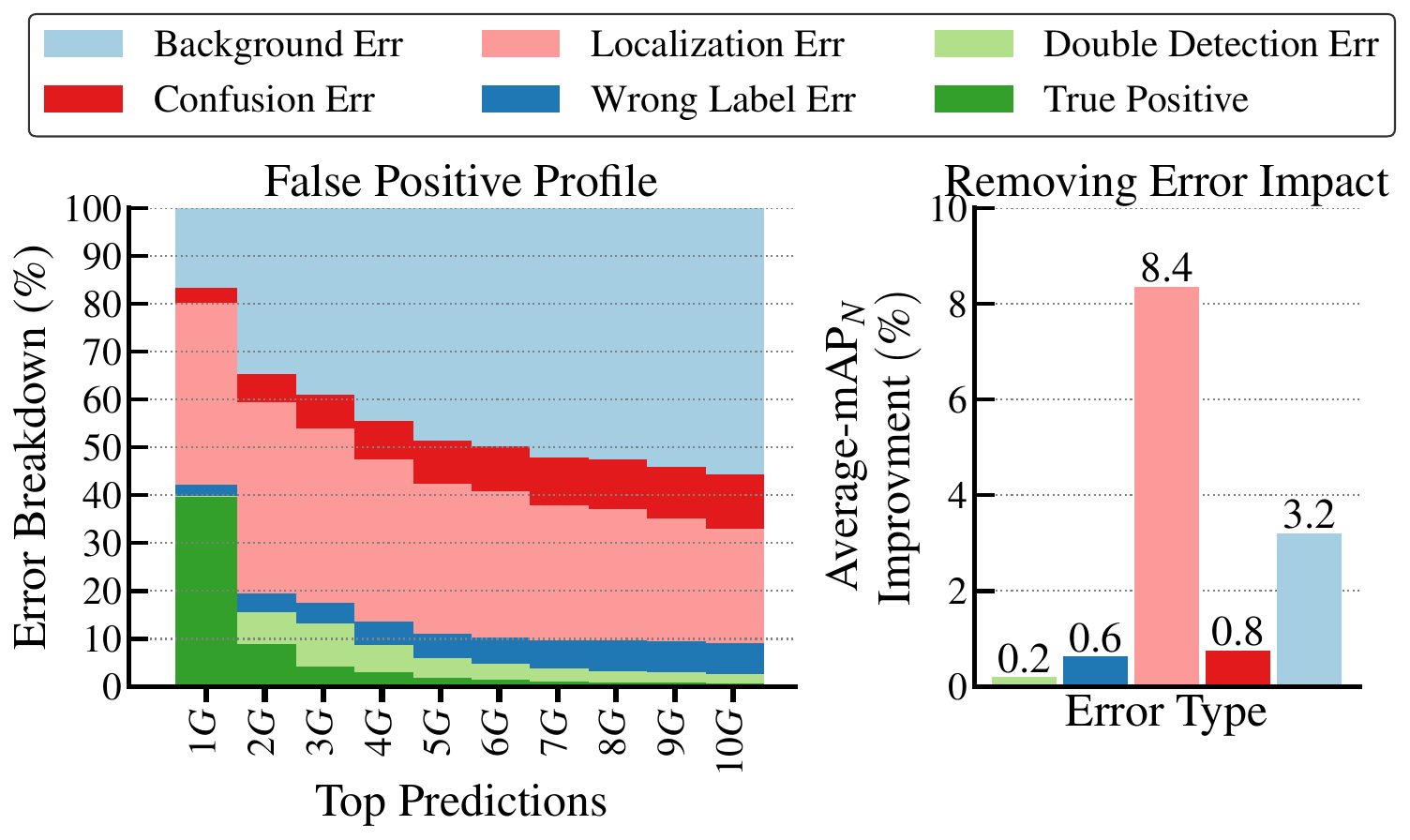}
    }
    
    \subcaptionbox{TriDet (VideoMAEv2)\label{fig:vmae_fp}}{
        \includegraphics[width=0.45\linewidth]{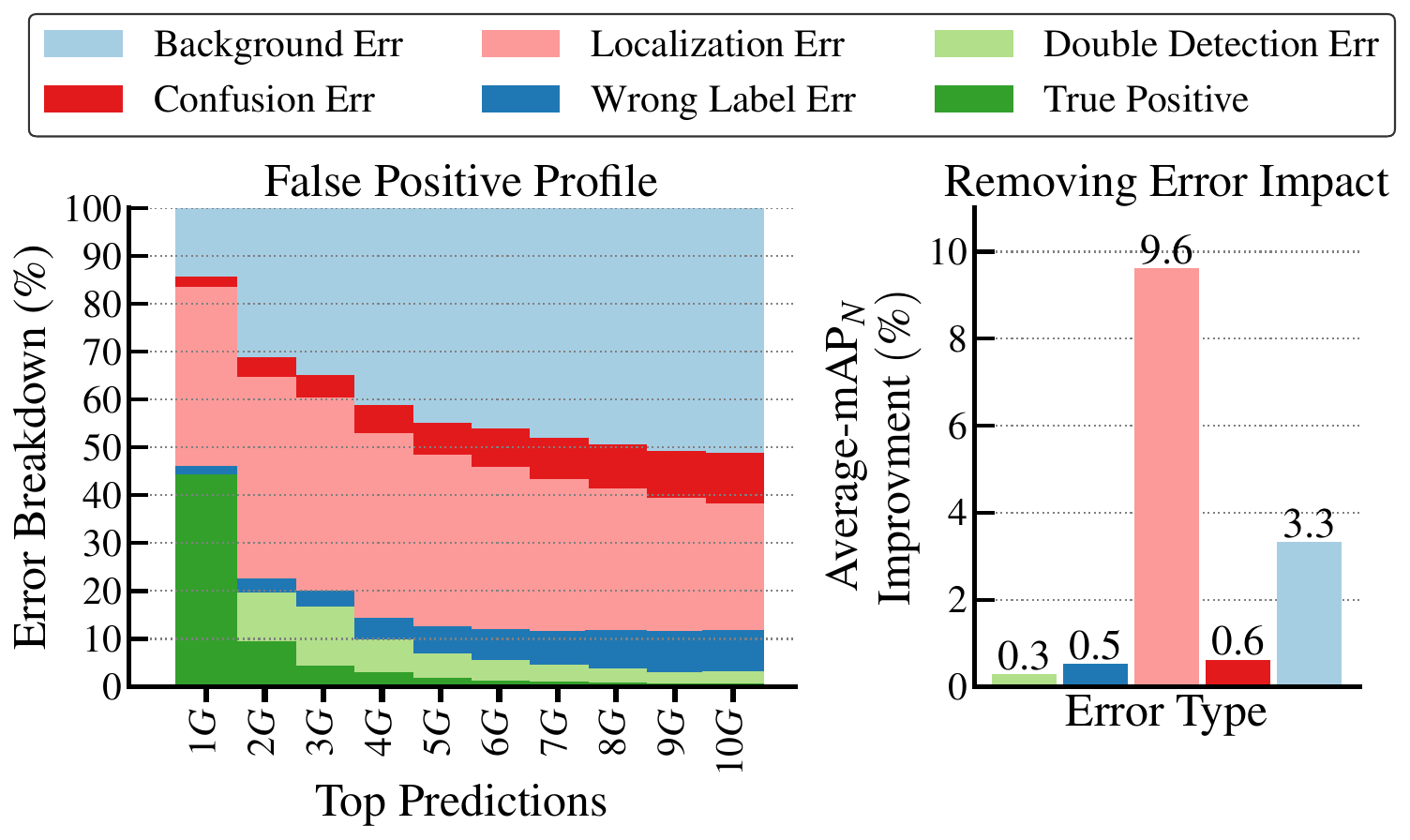}
    }
    \quad
    \subcaptionbox{TriDet-Fused\label{fig:vmae_dino_fp}}{
        \includegraphics[width=0.45\linewidth]{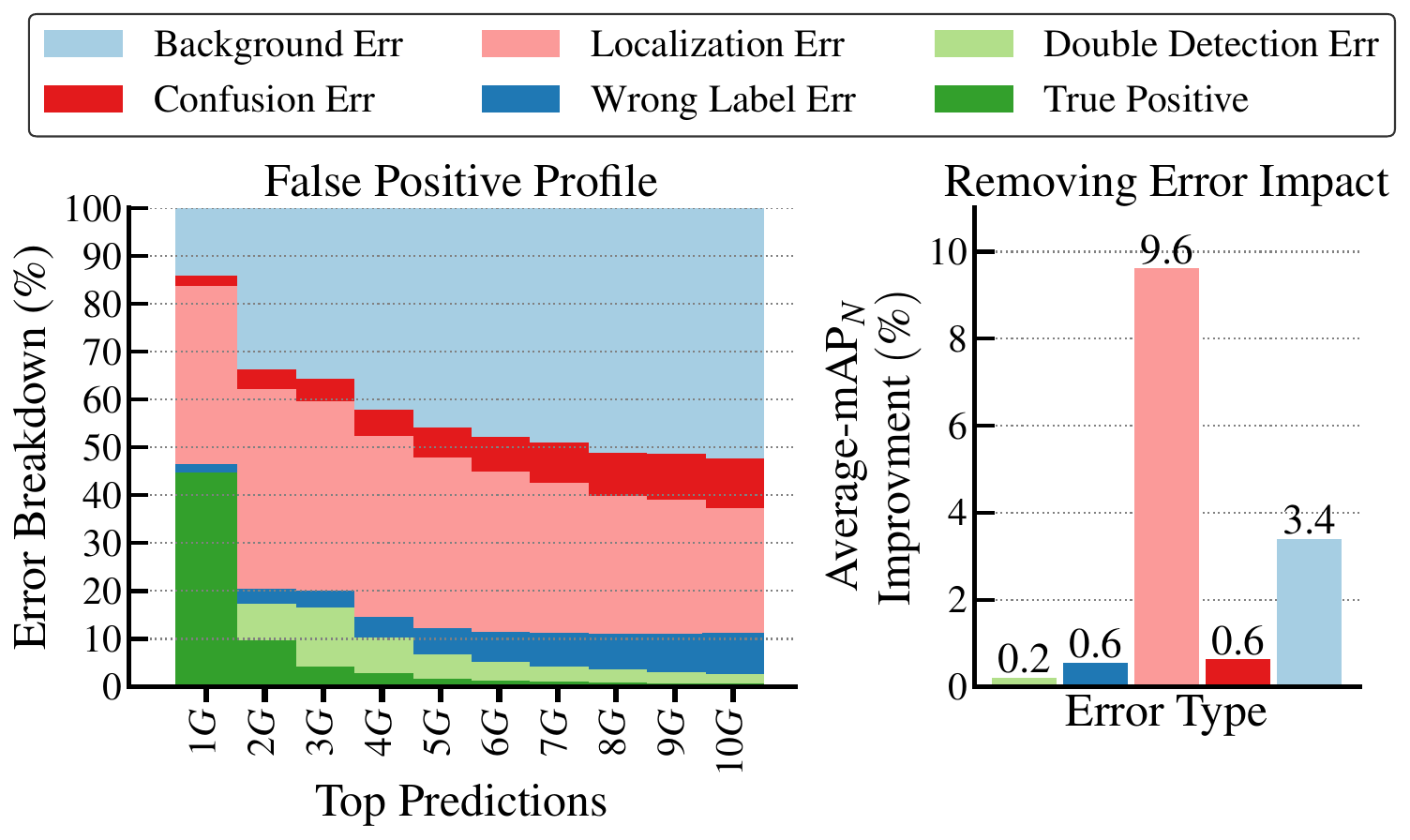}
    }

  \caption{False positive analysis of the detection results, which counts the percentage of several common types of detection error in different Top-KG prediction groups, where G is the number of groundtruth segments.}
  \label{fig:falsepositive}
  \vspace{-0.1cm}
  \end{minipage}
\end{figure*}

\subsection{Error Analysis for Different Backbones}
In this section, we utilize the tool provided by ~\citep{alwassel2018diagnosing} to analyze the detection results for the four types of backbones on HACS, including DINOv2, SlowFast, VideoMAEv2 and the fusion of VideoMAEv2 and DINOv2. Specifically, we analyze the false positives, false negatives, and sensitivity.

The false positive analysis counts the percentage of five types of detection error in different Top-KG predictions, where G is the number of groundtruth segments. The five types of detection error are (1) background error (\ie~background segments are predicted as action instances), (2) localization error (\ie~an instance that correctly predicts the label and $0.1 \leq \sigma_{IoU} < \alpha$, where $\alpha$ is the preset threshold), (3) double detection error (\ie~accurate but repeatedly predicted instances), (4) confusion error (\ie~an instance that incorrectly predicts the label and $0.1 \leq \sigma_{IoU} < \alpha$) and (5) {wrong label error (\ie~an instance that incorrectly predicts the label but $\sigma_{IoU} \geq \alpha$).
In addition, the removing error impact is the improvement gained from removing the error predictions of different types.

The false negative analysis and sensitivity analysis are conducted for varying characteristics.
Here, characteristics include coverage (\ie~normalized length), length (\ie~absolute length), and number of instances. The tool divides the test data into groups (denoted as XS, S, M, L, XL) based on \emph{instances of different lengths} and \emph{videos with different numbers of instances} and analyzes the results of each group individually. For average, the five groups are (0,0.2], (0.2,0.4], (0.4,0.6], (0.6,0.8], (0.8,1]. For length, the five groups are (0,30], (30,60], (60,120], (120,180], (180,$+\inf$) in seconds. For the number of instances, the four groups are (0,1], (1, 4], (4,8], (8,$+inf$). 

\subsubsection{Comparison of VideoMAEv2 and SlowFast}
\label{sec:vmae_slowfast}
In this section, we conduct experiments to analyse the pros and cons for VideoMAEv2 in \secref{sec:videomae} comparing with SlowFast. SlowFast is a powerful backbone that contains slow and fast paths to capture context at different scales, and is trained on the Kinect dataset. VideoMAEv2 is a temporal-level backbone that captures temporal context with a short scale (window size = 16), and is trained on a large amount of unlabeled video data and fine-tuned on the Kinect dataset.

In \figref{fig:sf_sen} and \figref{fig:vmae_sen}, we observe that VideoMAEv2 has a significant improvement in accuracy for short action segments (XS, S, M in the pink bar). However, in the case of long action segments (L and XL in the pink bar), SlowFast outperforms VideoMAEv2. It suggests that the long temporal context captured by the SlowFast is beneficial for detecting long action segments. On the other hand, since HACS contains a larger proportion of short instances, VideoMAEv2 achieves better overall results.

In \figref{fig:sf_fn} and \figref{fig:vmae_fn}, the false negative rate of VideoMAEv2 is lower than SlowFast in all characteristics. However, in \figref{fig:sf_fp} and \figref{fig:vmae_fp}, we also observe an increase in the removing error impact of \emph{localization error} and \emph{background error} for VideoMAEv2 compared to SlowFast. 
The results above indicate that VideoMAEv2 enhances the overall mAP by increasing the detection rate of action segments. However, it also results in more redundant and imprecise predictions.

\subsubsection{Comparison for DINOv2 and VideoMAEv2}
\label{sec:dino_vmae}
In this section, we investigate the effectiveness and differences of temporal-level and spatial-level pre-trained backbones with  VideoMAEv2 and DINOv2 in TAD. DINOv2 and VideoMAEv2 have been pre-trained on large-scale image and video datasets, respectively.

In \figref{fig:dino_sen} and \figref{fig:vmae_sen}, we can see the only DINOv2 backbone can still achieve promising results on the HACS dataset (average mAP 33.6\%). However, the temporal-level context of VideoMAEv2 outperforms the spatial-level context of DINOv2 for action segments with different lengths. This aligns with our expectations, as the single spatial-level context lacks motion information and struggles to fully represent the action.

Additionally, in \figref{fig:dino_fp} and \figref{fig:vmae_fp}, we analyze the false positives for the two models. We can observe that the most significant types of false positive segments for the two models are \emph{localization error}. However, the impact of \emph{localization error} is higher for VideoMAEv2 compared to DINOv2, indicating that VideoMAEv2 has a greater potential to benefit from more accurate localization, despite having better overall performance (43.1\% vs 33.7\%). The results show that VideoMAEv2 alone still has room for improvement in localization, and there is a need to use spatial-level information to help improve it further.

\subsubsection{Comparison for VideoMAEv2 and the fused model of DINOv2 and VideoMAEv2}
In this section, we will analyze the effectiveness of the fused model (TriDet-Fused) by comparing it with the VideoMAEv2-only model.

In \figref{fig:vmae_sen} and \figref{fig:vmae_dino_sen}, we can observe that the average mAP for videos with more than one segment (\ie~the S, M, L items in the brown bar) increases. The accuracy of predicting short-length segments (\ie~XS, S in the pink bar) has also improved, though there is slightly less accuracy noticed for extremely long segments (\ie~XL in the pink bar).

Moreover, in \figref{fig:vmae_fn} and \figref{fig:vmae_dino_fn}, we can observe that the segments missed by TriDet using only VideoMAEv2 are mainly small ones (\ie~XS, S in the pink bar). However, this problem can be mitigated by combining VideoMAEv2 with the DINOv2, as demonstrated by the decrease in the false negative rates(\ie~\\35.9\% vs 35.7\% and 25.8\% vs 25.6\%, for XS and S respectively).

\begin{figure*}[t]
    \centering
    {
        \includegraphics[width=0.9\linewidth]{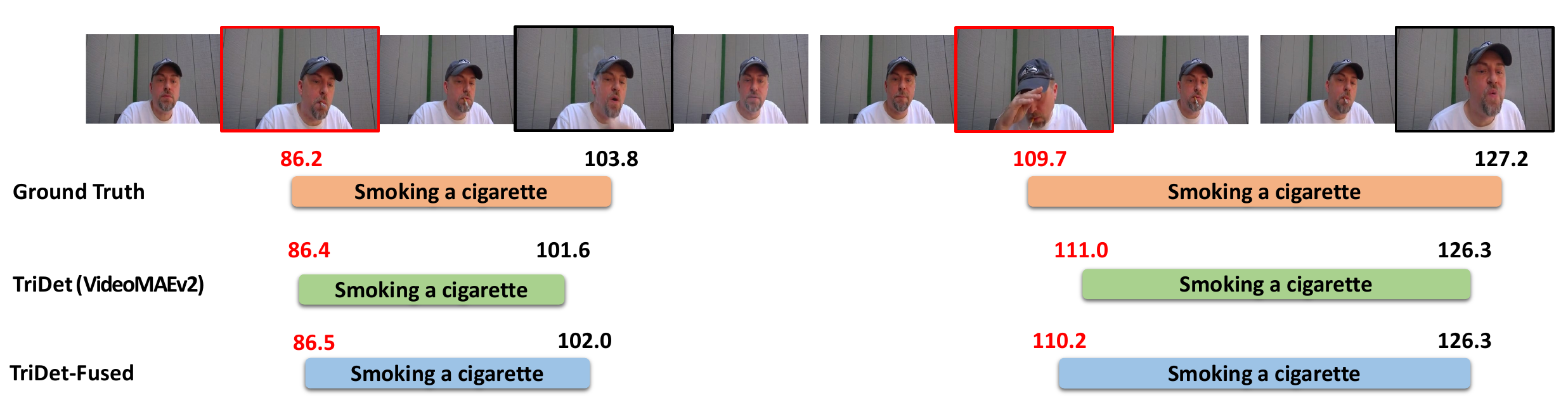}
        \includegraphics[width=0.9\linewidth]{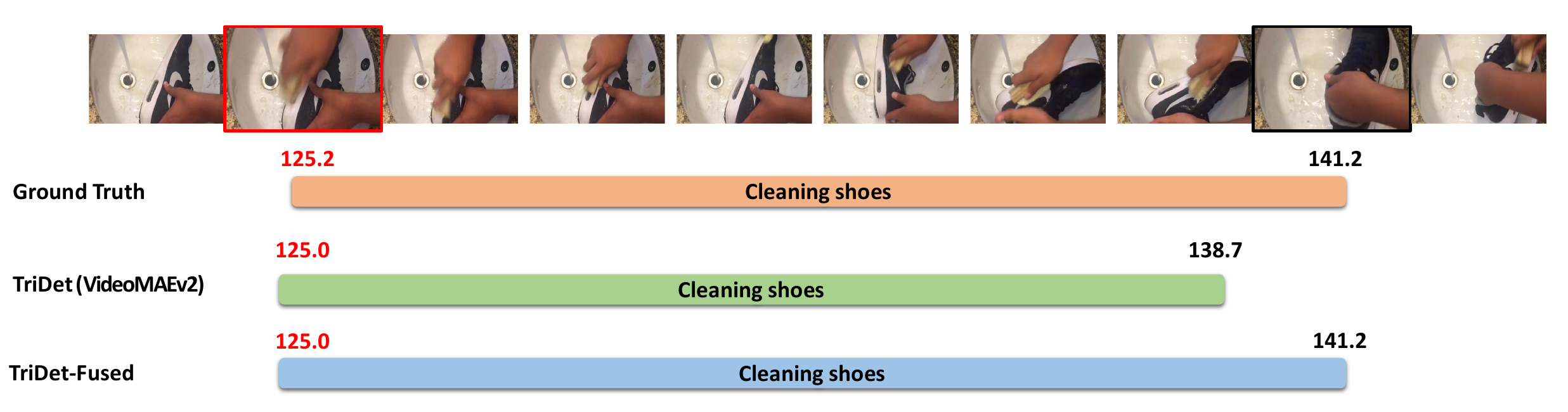}
        \includegraphics[width=0.9\linewidth]{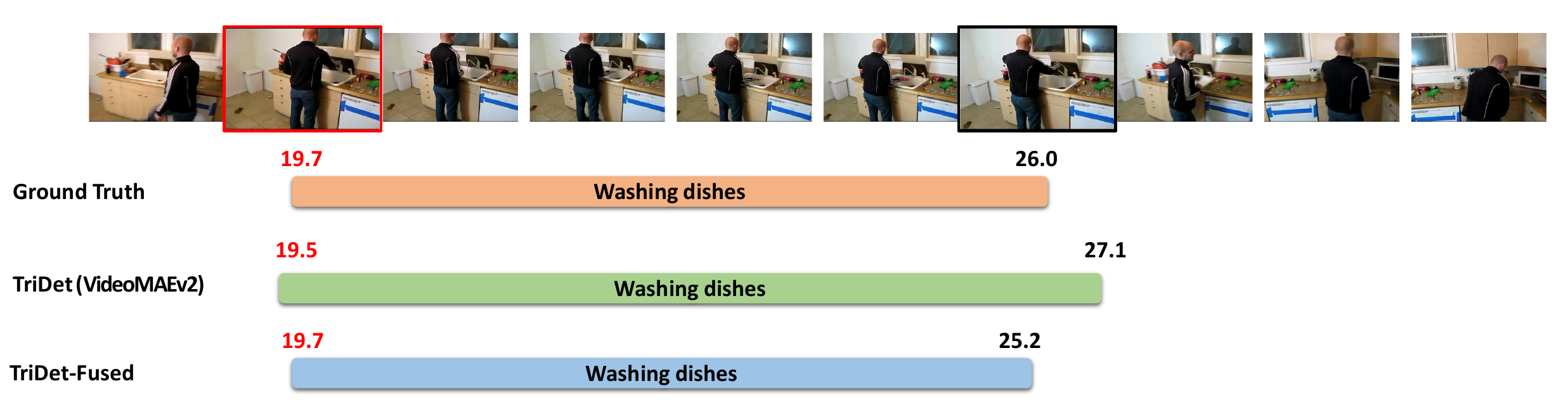}
}

  \caption{Visualization results of the HACS validation set. The start and end timestamps (in seconds) of the actions are highlighted in red and black text, respectively.}
  \label{fig:vis}
\end{figure*}

The results indicate that the DINOv2 improves\\TriDet's ability to detect multiple short segments from videos. However, for extremely long videos, the detector struggles to incorporate too much spatial-level context and still relies on the temporal-level backbone. This is because DINOv2 is trained on the image datasets, which makes it difficult to capture the motion dynamics in long videos. The results suggest that there is still room for improvement in the method.

In addition, in \figref{fig:vmae_fp} and \figref{fig:vmae_dino_fp}, after removing the imprecise localization prediction (\ie~localization error), the average mAP improves by $9.6\%$, indicating that even though TriDet achieved state-of-the-art performance, inaccurate localization is still the main problem. Moreover, background error is also an important issue. 
Thus, how to suppress meaningless predictions remains an open question. The improvement in removing error impact (\eg~backgroud error, wrong label error) also suggests that the introduction of DINOv2 pushes the upper limits of TriDet's capabilities.

\subsection{Qualitative Analysis}
In \figref{fig:vis}, we present the visualization of the detection results on the HACS validation set.Clearly, TriDet can accurately predict the start and end instants of the action segments. Moreover, for action segments that involve specific objects such as cigarettes and shoe brushes, TriDet-Fused shows better performance, indicating that decoupled fusion can enhance the accuracy of boundary prediction even further.

\section{Conclusion}
In this paper, we present TriDet, a one-stage convolutional-based framework. To enhance localization learning, we propose a Trident-head to model the action boundary using an estimated relative probability distribution around the boundary. We also address the rank-loss problem commonly found in transformer-based methods by introducing an efficient SGP layer. Additionally, we leverage pretrained large models to improve the discriminability in the video backbone and present a decoupled FPN that can further boost the detection accuracy. We evaluate our method on THUMOS14, HACS, Multi-THUMOS, and Charades datasets, showcasing its high generalization capability and state-of-the-art performance. We conduct extensive ablation studies to validate the effectiveness of our approach.

\begin{acknowledgements}
This work was supported by the National Key Research and Development Program of China under 2021YFB2700304, the National Natural Science Foundation of China under Grant No.61971017, No.U21A20523 and No. L222152, the National Key Research and Development Program of China under 2021ZD0140407 and the Special Project for Innovation in Next-Generation Electronic Information Technology under Grant No.20310105D.
\end{acknowledgements}

\appendices
\section{The rank-loss problem in Transformer.}
In~\citep{dong2021attention}, the authors discuss how the pure self-attention operation causes the input feature to converge to a rank-1 matrix at a double exponential rate, while MLP and residual connections can only partially slow this convergence. 
We have observed this phenomenon not only during initialization but also during training, which is disastrous for TAD tasks. This is because the video feature sequences extracted by pre-trained action recognition networks are often highly similar (see \figref{fig:cosine}), which further aggravates the rank-loss problem and makes the features at each instant indistinguishable, resulting in inaccurate detection of action.

We posit that the core reason for this issue lies in the softmax function used in self-attention. Namely, the probability matrix (\ie~softmax($QK^T$)) is \emph{nonnegative} and \emph{the sum of each row is 1}, indicating the outputs of SA are \emph{convex combination} for the value feature $V$. We demonstrate that the largest angle between any two features in $V' = SA(V)$ is always less than or equal to the largest angle between features in $V$.

\begin{definition}[Convex Combination]
Given a set of points $S=\{x_1, x_2..., x_n\}$, a convex combination is a point of the form $\sum_{n}{a_nx_n}$, where $a_n\geq0$ and $\sum_{n}{a_n}=1$.
\end{definition}

\begin{definition}[Convex Hull]
The convex hull $H$ of a given set of points $S$ is identical to the set of all their convex combinations. A convex hull is a convex set.
\end{definition}

\begin{property}[Extreme point]
An extreme point $p$ is a point in the set that does not lie on any open line segment between any other two points of the same set. For a point set $S$ and its convex hull $H$, we have $p\in S$.
\end{property}

\begin{lemma} 
\label{lemma:two}
Consider the case of a convex hull that does not contain the origin.
Let $a, b \in \mathbb{R}^n$ and let $S$ be the convex hull formed by them. Then, the angle between any two position vectors of points in $S$ is less than or equal to the angle between the position vectors of the extreme points $\vec{a}$ and $\vec{b}$.
\end{lemma}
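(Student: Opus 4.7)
The plan is to reduce to a two-dimensional setting and then show that the polar angle of a point on $S$ varies monotonically with the segment parameter, which forces the angular spread on $S$ to be attained between the endpoints. First observe that $S$ is contained in the subspace $V := \mathrm{span}(a,b)$. In the degenerate case where $a$ and $b$ are linearly dependent, the hypothesis $0 \notin S$ (together with $a, b \neq 0$) forces $b$ to be a \emph{positive} multiple of $a$, so $\angle(a,b) = 0$ and every point of $S$ lies on the same ray from the origin, trivializing the claim. I therefore assume $a, b$ linearly independent and identify $V \cong \mathbb{R}^2$ via an orthonormal basis, so that the polar angle and the 2D scalar cross product $\times$ are available.

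For the main step, parametrize $p(t) := (1-t)a + tb$ on $[0,1]$. Since $0 \notin S$, $p(t) \neq 0$ throughout, so the polar angle lifts to a continuous $\theta : [0,1] \to \mathbb{R}$. Differentiating $p(t) = \|p(t)\|(\cos\theta(t), \sin\theta(t))$ yields the standard identity
\begin{equation}
\theta'(t) \;=\; \frac{p(t) \times p'(t)}{\|p(t)\|^2},
\end{equation}
and since $p'(t) = b - a$ is constant, bilinearity together with $a \times a = b \times b = 0$ collapses the numerator to the constant $a \times b$, which is nonzero by linear independence. Hence $\theta'$ has constant sign and $\theta$ is strictly monotone on $[0,1]$, so for any $p = p(t_1)$ and $q = p(t_2)$ in $S$,
\begin{equation}
|\theta(t_1) - \theta(t_2)| \;\le\; |\theta(1) - \theta(0)| \;=\; \angle(a,b) \;\le\; \pi.
\end{equation}
Because the total swing is at most $\pi$, this lifted difference coincides with the unsigned angle between $\vec p$ and $\vec q$, yielding the lemma.

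The main obstacle is the bookkeeping around the degenerate case and the justification that the lifted polar-angle difference really equals the unsigned angle between vectors in $\mathbb{R}^n$; both are secured once the whole segment is known to stay in the 2-plane $V$, avoid the origin, and have total angular sweep at most $\pi$. Everything else reduces to the one-line identity $p(t) \times p'(t) = a \times b$, which is the real engine of the argument. Once the two-point case is in hand, an induction on the number of extreme points (or equivalently the observation that extreme points of the convex hull control the angular diameter) should extend this to arbitrary convex combinations, which is what the paper needs to bound the maximal angle among rows of $V' = SA(V)$.
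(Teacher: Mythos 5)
Your proof is correct, and it takes a genuinely different (and arguably cleaner) route than the paper's. The paper stays in $\mathbb{R}^n$ and differentiates the \emph{cosine} of the angle between two points $\vec{a}+x\vec{b}$ and $\vec{a}+y\vec{b}$ with respect to $x$: after expanding, the derivative's sign reduces via Cauchy--Schwarz to the sign of $(\langle\vec a,\vec b\rangle^2-\|\vec a\|^2\|\vec b\|^2)(x-y)\le 0$, so for fixed $y$ the angle is maximized at an endpoint of the parameter range; applying this twice (once in each argument) pins the maximum at the pair $(\vec a,\vec b)$. You instead reduce to the $2$-plane $\mathrm{span}(a,b)$ and show the polar angle itself is strictly monotone along the segment via the identity $p(t)\times p'(t)=a\times b$, so the angular spread is attained at the endpoints in a single stroke. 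Your version buys a one-variable argument, an explicit treatment of the collinear degenerate case (which the paper glosses over), and a transparent geometric picture; the paper's version buys freedom from any lifting or winding bookkeeping, since cosine of the unsigned angle is well defined without choosing a branch of $\theta$. The one step you should tighten is the assertion $|\theta(1)-\theta(0)|=\angle(a,b)\le\pi$: you list ``total angular sweep at most $\pi$'' among the facts to be secured, but it is itself the thing needing proof, and it is exactly where the hypothesis $0\notin S$ enters your argument (beyond making $\theta$ well defined). The fix is short: the image of the segment under $p\mapsto p/\|p\|$ is a connected arc, and if that arc had length $\ge\pi$ it would contain two antipodal directions, whence the subsegment joining their preimages would cross the origin, a contradiction; hence the sweep is $<\pi$ and the lifted difference equals the unsigned angle. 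With that sentence added, your proof is complete, and your closing remark about extending to $k$ points by induction on extreme points matches how the paper's Theorem 2 proceeds.
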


\begin{proof}
Consider the objective function
\begin{equation}
    f(x) = \cos{(\vec{x},\vec{y})} = \frac{\langle\vec{x},\vec{y}\rangle}{\left\lVert\vec{x}\right\rVert_2\left\lVert\vec{y}\right\rVert_2},
\end{equation}
where $\vec{x},\vec{y}$ are the position vectors of two points $x_1, x_2$ within the convex hull $S$ (a line segment with extreme points $a$ and $b$). The angle between two vectors is invariant with respect to the magnitude of the vectors; thus, for simplicity, we define $\vec{x}=\vec{a}+x\vec{b}$, $\vec{y}=\vec{a}+y\vec{b}$, where $x,y \in[0,+\infty)$.
Moreover, we have 
\begin{equation}
\begin{aligned}
    f'(x) =&\left\lVert\vec{x}\right\rVert^{-3}_2 \left\lVert\vec{y}\right\rVert^{-1}_2 \times\\
    &[{\langle\Vec{b},\vec{y}\rangle||\vec{a}+x\vec{b}||_2^2-(||\vec{b}||^{2}_2x+\langle\vec{a},\vec{b}\rangle)\langle\vec{a}+x\vec{b},\vec{y}\rangle}]    
\end{aligned}
\end{equation}
We consider 
\begin{equation}
\begin{aligned}
    g(x)=&\langle\Vec{b},\vec{y}\rangle||\vec{a}+x\vec{b}||_2^2-(||\vec{b}||^{2}_2x+\langle\vec{a},\vec{b}\rangle)\langle\vec{a}+x\vec{b},\vec{y}\rangle \\
    =&\langle\vec{b},\vec{y}\rangle(||\vec{a}||_2^2+2\langle\vec{a},\vec{b}\rangle x+||\vec{b}||_2^2x^2)-[\langle\vec{b},\vec{y}\rangle||b||_2^2x^2\\
    &+(\langle\vec{a},\vec{b}\rangle||b||_2^2+\langle\vec{a},\vec{b}\rangle\langle\vec{b},\vec{y}\rangle)x+\langle\vec{a},\vec{y}\rangle\langle\vec{a},\vec{b}\rangle]\\
    =&(\langle\vec{a},\vec{b}\rangle\langle\vec{b},\vec{y}\rangle-\langle\vec{a},\vec{y}\rangle\langle\vec{b},\vec{b}\rangle)x + \langle\vec{a},\vec{a}\rangle\langle\vec{b},\vec{y}\rangle-\langle\vec{a},\vec{y}\rangle\langle\vec{a},\vec{b}\rangle.
\end{aligned}    
\end{equation}
Substituting $\vec{y}=\vec{a}+y\vec{b}$ into the above equation, we have 
\begin{equation}
\begin{aligned}
    g(x)=&(\langle\vec{a},\vec{b}\rangle\langle\vec{b},\vec{a}+y\vec{b}\rangle-\langle\vec{a},\vec{a}+y\vec{b}\rangle\langle\vec{b},\vec{b}\rangle)x + \\
    &\langle\vec{a},\vec{a}\rangle\langle\vec{b},\vec{a}+y\vec{b}\rangle-\langle\vec{a},\vec{a}+y\vec{b}\rangle\langle\vec{a},\vec{b}\rangle\\
    =&[\langle\vec{a},\vec{b}\rangle(\langle\vec{a},\vec{b}\rangle+y\langle\vec{b},\vec{b}\rangle)-(\langle\vec{a},\vec{a}\rangle+y\langle\vec{a},\vec{b}\rangle)\langle\vec{b},\vec{b}\rangle]x +\\
    & [\langle\vec{a},\vec{a}\rangle(\langle\vec{a},\vec{b}\rangle+y\langle\vec{b},\vec{b}\rangle)-(\langle\vec{a},\vec{a}\rangle+y\langle\vec{a},\vec{b}\rangle)\langle\vec{a},\vec{b}\rangle]\\
    =&(||\langle\vec{a},\vec{b}\rangle||_2^2-||\vec{a}||_2^2||\vec{b}||_2^2)x+(||\vec{a}||_2^2||\vec{b}||_2^2-||\langle\vec{a},\vec{b}\rangle||_2^2)y\\
    =&(||\langle\vec{a},\vec{b}\rangle||_2^2-||\vec{a}||_2^2||\vec{b}||_2^2)(x-y).
\end{aligned}
\end{equation}
According to the Cauchy-Schwartz inequality, we can obtain 
\begin{equation}
    ||\langle\vec{a},\vec{b}\rangle||_2^2-||\vec{a}||_2^2||\vec{b}||_2^2\leq0
\end{equation}
Then, we have
\begin{equation}
g(x)\left\{
\begin{aligned}
>0 & &x<y\\
=0 & &x=y \\
<0 & &x>y.
\end{aligned}
\right.    
\end{equation}
Thus, for any position vector $\vec{y}$, when $x=0$ or $x\rightarrow \infty$ (Equivalent to $\Vec{x} =\vec{a}$ or $\Vec{x} =\vec{b}$), the angle formed between $\Vec{y}$ and $\vec{x}$ is maximum.

Without loss of generality, given a specific $\vec{y}$, if its maximum vector $\vec{x}=\vec{a}$, we can then set $\vec{y}$ to $\vec{a}$ and find its maximum vector again, which yields
\begin{equation*}    \theta(\vec{x},\vec{y})\leq\theta(\vec{a},\vec{y})\leq\theta(\vec{b},\vec{a})
\end{equation*}
The proof is completed.
\end{proof}

\begin{theorem}
\label{theo:gen}
Consider the case of a convex hull that does not contain the origin. Let $X = \{x_1, x_2, \dots, x_k\}$ be a set of points, and let $S$ be its convex hull. Then, the maximum angle between the position vectors of any two points in $S$ is formed by the position vectors of two extreme points of $S$.
\end{theorem}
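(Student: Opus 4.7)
The strategy is to reduce Theorem~\ref{theo:gen} to Lemma~\ref{lemma:two} via an iterative argument on the convex-combination representations of points in $S$: for any pair $p, q \in S$, one of the two can be replaced by an extreme point without decreasing the angle $\theta(\vec{p}, \vec{q})$, and iterating finitely many times reduces both members of the pair to extreme points.

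First, hold $\vec{q} \in S$ fixed. Express $p = \sum_{i=1}^k a_i x_i$ as a convex combination of extreme points, with $a_i \geq 0$ and $\sum_i a_i = 1$. If $p$ is already an extreme point there is nothing to do; otherwise choose any index with $a_i \in (0,1)$, say $i = 1$, and rewrite
\begin{equation*}
p = a_1 \vec{x_1} + (1 - a_1)\, \vec{p'}, \qquad \vec{p'} = \sum_{j \geq 2} \frac{a_j}{1-a_1}\, \vec{x_j} \in S,
\end{equation*}
where $\vec{p'}$ has strictly smaller representation support. The segment $[\vec{x_1}, \vec{p'}]$ lies in $S$ and therefore excludes the origin, so Lemma~\ref{lemma:two}, applied to this segment with $\vec{q}$ playing the role of the external reference direction, yields
\begin{equation*}
\theta(\vec{p}, \vec{q}) \;\leq\; \max\bigl\{\,\theta(\vec{x_1}, \vec{q}),\; \theta(\vec{p'}, \vec{q})\,\bigr\}.
\end{equation*}
Replacing $p$ by whichever of $\vec{x_1}$ or $\vec{p'}$ attains the maximum either yields an extreme point immediately or strictly reduces the support; iterating terminates after finitely many steps at an extreme point $\vec{x_{i^*}}$ with $\theta(\vec{x_{i^*}}, \vec{q}) \geq \theta(\vec{p}, \vec{q})$.

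Next, swap the roles: holding $\vec{x_{i^*}}$ fixed, run the same reduction on $q$ to obtain an extreme point $\vec{x_{j^*}}$ satisfying $\theta(\vec{x_{i^*}}, \vec{x_{j^*}}) \geq \theta(\vec{x_{i^*}}, \vec{q}) \geq \theta(\vec{p}, \vec{q})$. Since $p, q \in S$ were arbitrary, the maximum of $\theta(\vec{p}, \vec{q})$ over $S \times S$ is attained at a pair of extreme points, which is precisely the conclusion of Theorem~\ref{theo:gen}.

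The main obstacle is justifying the key inequality via Lemma~\ref{lemma:two}: the literal statement of that lemma considers the case where both position vectors lie on the segment, whereas here $\vec{q}$ typically lies off the segment $[\vec{x_1}, \vec{p'}]$. Fortunately, the proof of Lemma~\ref{lemma:two} reduces to analysing the sign of a linear function $g(x)$ governing the derivative of $\cos\theta(\vec{x}(x), \vec{q})$, and this analysis depends only on the inner products $\vec{x_1}\cdot\vec{q}$, $\vec{p'}\cdot\vec{q}$ and $\vec{x_1}\cdot\vec{p'}$, not on $\vec{q}$'s location relative to the segment. The origin-exclusion hypothesis is used exactly where the Cauchy-Schwartz inequality fixes the sign of the leading coefficient of $g$, so the extension to an external $\vec{q}$ is mild, and termination of the induction is immediate because the representation support strictly decreases at each step.
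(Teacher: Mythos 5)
Your strategy is the same as the paper's own proof: peel one extreme point off the convex-combination representation, invoke Lemma~\ref{lemma:two} on the resulting segment with the other member of the pair as an external reference direction, and then swap roles. Unfortunately, the step you yourself flag as ``the main obstacle'' is a genuine gap (shared by the paper), and your proposed fix does not work. In the proof of Lemma~\ref{lemma:two}, the sign of the leading coefficient of $g(x)$, namely $\langle\vec{a},\vec{b}\rangle\langle\vec{b},\vec{y}\rangle-\langle\vec{a},\vec{y}\rangle\langle\vec{b},\vec{b}\rangle$, is pinned down by Cauchy--Schwarz only \emph{after} substituting $\vec{y}=\vec{a}+y\vec{b}$, i.e., only because $\vec{y}$ lies in the cone spanned by $\vec{a}$ and $\vec{b}$. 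For a reference point off the segment this coefficient can be positive, in which case $\cos\theta(\vec{x},\vec{y})$ has an interior \emph{minimum} along the segment, so the angle is maximized at an interior point rather than at an endpoint, and the inequality $\theta(\vec{p},\vec{q})\leq\max\{\theta(\vec{x_1},\vec{q}),\theta(\vec{p'},\vec{q})\}$ fails.

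This is not merely a hole in the argument: in $\mathbb{R}^3$ the statement is actually false. Take $q=(1,0,\delta)$, $a=(-1,-s,\delta)$, $b=(-1,s,\delta)$ with small $s,\delta>0$; the convex hull lies in the half-space $z=\delta>0$, hence excludes the origin, and its extreme points are exactly $q,a,b$. For the midpoint $m=\tfrac{1}{2}(a+b)=(-1,0,\delta)$ one has $\cos\theta(q,m)=\frac{\delta^2-1}{\delta^2+1}$, whereas $\cos\theta(q,a)=\cos\theta(q,b)=\frac{\delta^2-1}{\sqrt{1+\delta^2}\sqrt{1+s^2+\delta^2}}$ is strictly larger (less negative), since the numerator is negative and the denominator is larger; $\theta(a,b)$ is small. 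Hence $\theta(q,m)$ strictly exceeds the maximum angle over all pairs of extreme points (for $\delta=s=0.1$, about $2.943$ versus $2.920$ radians), and the example survives normalizing all three points to the unit sphere. So no proof along these lines can succeed in dimension at least three; the reduction to extreme points is valid only when both points of the pair lie on a common segment (which is all Lemma~\ref{lemma:two} establishes) or in the coplanar case, where a point of the segment antipodal in direction to $\vec{y}$ would force the origin into the hull. If you want a correct general statement you must either restrict to two points per comparison or weaken the conclusion.
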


\begin{proof}
Assume that this case holds when k.

When $k=2$, based on Lemma \ref{lemma:two}, the maximum angle is formed by the extreme points $\vec{x_1}$ and $\vec{x_2}$.

When $k\geq3$, we can sort the elements of X such that for a point $y$ in $S$, $\vec{x_k}$ maximizes the angle $\theta(\vec{y},\vec{x_k})$. Furthermore, the points $x$ in $S$ are of the form:
\begin{equation}
\begin{aligned}
    &\lambda_1\vec{x_1}+\lambda_2\vec{x_2}+...+\lambda_k\vec{x_k}\\
    =&(\lambda_1+...+\lambda_{k-1})(\frac{\lambda_1 \vec{x_1}}{\lambda_1+...+\lambda_{k-1}}+...+\frac{\lambda_{k-1}\vec{x_{k-1}}}{\lambda_1+...+\lambda_{k-1}})\\
    &+\lambda_k\vec{x_k},
\end{aligned}
\end{equation}
where $(\frac{\lambda_1 \vec{x_1}}{\lambda_1+...+\lambda_{n-1}}+...+\frac{\lambda_{k-1}\vec{x_{k-1}}}{\lambda_1+...+\lambda_{k-1}})$ is a position vector of a point located within the convex hull induced by $\{x_1,x_2,...,x_{k-1}\}$. Through Lemma~\ref{lemma:two} and by definition, we can obtain
\begin{equation}
    \theta(\vec{x},\vec{y})\leq\theta(\vec{x_k},\vec{y})
\end{equation}
For any two points x and y in a convex hull S, by setting $\vec{y}=\Vec{x_k}$ and using the above inequality twice, without loss of generality, we can assume that the vector $\vec{x_1}$ makes the largest angle with $\vec{x_k}$. Then, we can obtain
\begin{equation}
\theta(\vec{x},\vec{y})\leq\theta(\vec{x_k},\vec{y})\leq\theta(\vec{x_1},\vec{x_k})
\end{equation}

By definition, $\theta(\vec{x_1},\vec{x_k})$ is no greater than the maximum angle formed by any other two basis vectors.

The proof is completed.
\end{proof}

\begin{corollary}
When the convex hull of the input set $V$ does not contain the origin, the largest angle between any two features after self-attention $V' = SA(V)$ is always less than or equal to the largest angle between features in $V$.
\end{corollary}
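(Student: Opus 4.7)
The plan is to reduce the corollary directly to Theorem~\ref{theo:gen} by observing that every row of $V' = \mathrm{SA}(V)$ is a convex combination of rows of $V$. Concretely, I would first unpack the definition: writing $P = \mathrm{softmax}(QK^T)$, each row $p_i$ of $P$ satisfies $p_{ij}\geq 0$ and $\sum_j p_{ij}=1$, so the $i$-th output feature is $v'_i = \sum_j p_{ij}\, v_j$. By the definition of a convex combination, this immediately places $v'_i$ inside the convex hull $S$ of $V=\{v_1,\dots,v_n\}$.

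Next, I would invoke Theorem~\ref{theo:gen}, which asserts that (provided $S$ does not contain the origin, as assumed in the hypothesis) the maximum angle between the position vectors of any two points of $S$ is attained at a pair of extreme points of $S$. Since the extreme points of the convex hull of a finite set are a subset of the set itself, these extreme points lie in $V$. Therefore, for any two indices $i,j$,
\begin{equation*}
\theta(\vec{v'_i},\vec{v'_j}) \;\leq\; \max_{p,q\in\mathrm{ext}(S)} \theta(\vec{p},\vec{q}) \;\leq\; \max_{k,\ell} \theta(\vec{v_k},\vec{v_\ell}),
\end{equation*}
which is exactly the claim.

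The only genuinely delicate point is to make sure the hypothesis that the convex hull does not contain the origin is actually used and transmitted correctly: without it, angles can exceed those of the extreme points (e.g.\ the midpoint of two symmetric vectors about the origin forms an angle of $\pi$ with either endpoint being undefined or reversed). Since the corollary explicitly assumes this, I simply need to remark that the hypothesis of Theorem~\ref{theo:gen} is satisfied for $S$, and the rest is a one-line application. I do not anticipate a serious obstacle beyond phrasing this reduction cleanly, because the heavy lifting (bounding angles inside a convex hull by angles between extreme points) has already been carried out in Lemma~\ref{lemma:two} and Theorem~\ref{theo:gen}.
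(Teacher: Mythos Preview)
Your proposal is correct and matches the paper's intended approach: the paper states this as a corollary of Theorem~\ref{theo:gen} without a separate proof, relying precisely on the observation (made explicitly in the text) that the softmax output makes each row of $V'$ a convex combination of the rows of $V$, so that both $v'_i$ and $v'_j$ lie in the convex hull of $V$ and Theorem~\ref{theo:gen} applies. Your write-up simply spells out this immediate reduction.
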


\begin{table*}[t]
\centering{
\caption{Comparison with the state-of-the-art methods on EPIC-KITCHEN dataset with the SlowFast backbone. \emph{V.} and \emph{N.} denote the \emph{verb} and \emph{noun} sub-tasks, respectively.}
\label{table:epic}
\setlength{\tabcolsep}{0.9mm}
\renewcommand{\arraystretch}{1.1}
\scalebox{0.9}{
\begin{tabular}{c|c|c c c c c c}
\toprule
 Subset &Method & 0.1 & 0.2 & 0.3 & 0.4 & 0.5 &Avg. \\
\midrule
\multirow{4}*{\tabincell{c}{\emph{V.}}}
&BMN~\citep{lin2019bmn}& 10.8 & 8.8 & 8.4 & 7.1& 5.6 & 8.4 \\
&G-TAD~\citep{xu2020g}& 12.1 & 11.0 & 9.4 & 8.1 & 6.5 & 9.4 \\
&ActionFormer~\citep{zhang2022actionformer}& 26.6 & 25.4 & 24.2 & 22.3 & 19.1 & 23.5 \\
&\textbf{TriDet}&\textbf{ 28.6}  & \textbf{27.4}  & \textbf{26.1} & \textbf{24.2}  & \textbf{20.8}  & \textbf{25.4} \\
\midrule
\multirow{4}*{\tabincell{c}{\emph{N.}}}
&BMN ~\citep{lin2019bmn}& 10.3 & 8.3 & 6.2 & 4.5 & 3.4 & 6.5 \\
&G-TAD~\citep{xu2020g}& 11.0 & 10.0 & 8.6 & 7.0 & 5.4 & 8.4 \\
&ActionFormer~\citep{zhang2022actionformer}& 25.2 & 24.1 & 22.7 & 20.5 & 17.0 & 21.9 \\
&\textbf{TriDet}& \textbf{27.4}  & \textbf{26.3}  & \textbf{24.6} & \textbf{22.2}  & \textbf{18.3}  & \textbf{23.8} \\
\bottomrule
\end{tabular}
}}
\end{table*}

\begin{remark}
In the temporal action detection (TAD) task, the temporal feature sequences extracted by the pretrained video classification backbone often exhibit high similarity and pure layer normalization~\citep{ba2016layer} projects the input features onto the hypersphere in the high-dimensional space. Consequently, the convex hull induced by these features often does not encompass the origin. As a result, the self-attention operation causes the input features to become more similar, reducing the distinction between temporal features and hindering the performance of the TAD task.
\end{remark}


\begin{table*}[t]
\centering{
\caption{Comparison with the state-of-the-art methods on ActivityNet-1.3 dataset.}
\label{table:activitynet}
\setlength{\tabcolsep}{1.3mm}
\renewcommand{\arraystretch}{1.1}
\scalebox{0.9}{
\begin{tabular}{c| c | c c c c c }
\toprule
Method & Backbone& 0.5 & 0.75 & 0.95 & Avg. \\
\midrule
PGCN~\citep{zeng2019graph} & I3D & 48.3 & 33.2 & 3.3 & 31.1\\
ReAct~\citep{shi2022react} & TSN & 49.6 & 33.0 & 8.6 & 32.6\\
BMN~\citep{lin2019bmn}& TSN & 50.1 & 34.8 & 8.3 & 33.9 \\
G-TAD~\citep{xu2020g}& TSN & 50.4 & 34.6 & 9.0 & 34.1\\
AFSD~\citep{lin2021learning} & I3D & 52.4 & 35.2 & 6.5 & 34.3\\
TadTR~\citep{liu2022end} & TSN & 51.3  & 35.0  & 9.5  & 34.6\\
TadTR~\citep{liu2022end} & R(2+1)D & 53.6  & 37.5  & 10.5  & 36.8\\
VSGN~\citep{zhao2021video} & I3D & 52.3 & 35.2 & 8.3 & 34.7\\
PBRNet~\citep{liu2020progressive} & I3D & 54.0 & 35.0 & 9.0 & 35.0\\
TCANet+BMN~\citep{qing2021temporal} & TSN & 52.3 & 36.7 & 6.9 & 35.5\\
TCANet+BMN~\citep{qing2021temporal} & SlowFast & 54.3 & \textbf{39.1} & \textbf{8.4} & \textbf{37.6}\\
TALLFormer~\citep{cheng2022tallformer} & Swin & 54.1 & 36.2 & 7.9 & 35.6\\
ActionFormer~\citep{zhang2022actionformer} & R(2+1)D & \textbf{54.7} & 37.8 & \textbf{8.4} & 36.6\\
\textbf{TriDet} & R(2+1)D &\textbf{54.7} & \textbf{38.0} & \textbf{8.4} & \textbf{36.8}\\
\bottomrule
\end{tabular}
}}
\end{table*}
\begin{table*}[t]
\centering{
\caption{Comparison with the state-of-the-art methods on the Charades datasets.}
\label{table:chrades}
\setlength{\tabcolsep}{1.6mm}
\renewcommand{\arraystretch}{1.1}
\begin{tabular}{c|c|cccc}
\toprule
\multirow{2}{*}{Method} & \multirow{2}{*}{Backbone}  & \multicolumn{4}{c}{Charades} \\
\cline{3-6}
                        &                               & 0.2    & 0.5    & 0.7   & Avg    \\
\midrule
PointTAD~\citep{tan2022pointtad}                & I3D    &  17.5  & 13.5  & 9.1  & 12.1  \\
ASL~\citep{shao2023action}& I3D                  & 24.5  & 16.5  & 9.4  & 15.4 \\
\midrule
\textbf{TriDet}                  & I3D          & \textbf{27.1}  & \textbf{20.4}  & \textbf{13.2} & \textbf{18.4}  \\
\bottomrule
\end{tabular}
}
\end{table*}


\section{Additional Experimental Results}
To further validate the robustness of TriDet, we conduct additional experiments on two single-label datasets: ActivityNet-1.3\citep{caba2015activitynet} and EPIC-KITCHEN 100~\citep{Damen2022RESCALING}, and a multilabel dataset Charades~\citep{sigurdsson2016hollywood}. 

EPIC-KITCHEN 100 is a large-scale dataset of first-person vision that has two subtasks: \emph{noun} localization (\eg~door) and \emph{verb} localization (\eg~open the door). It contains 495 and 138 videos with 67,217 and 9,668 action segments for training and test, respectively. The number of action classes for \emph{noun} and \emph{verb} are 300 and 97. ActivityNet shares 200 classes of action with the HACS dataset and contains 10,024 videos for training, as well as 4,926 videos for test. Charades is a large-scale common household activities dataset that contains 7,985 and 1,863 videos for training and test, with 49,809 and 16,691 action segments, respectively.

For EPIC-KITCHEN, we report IoU thresholds at [0.1:0.5:0.1]. For ActivityNet, we report the result at the IoU threshold [0.5, 0.75, 0.95], and the average mAP is computed at [0.5:0.95:0.05]. We report the average IoU with thresholds [0.1: 0.1: 0.9] for the Charades datasets. 

The initial learning rate is set to $10^{-4}$ for Charades and EPIC-KITCHEN, and  $10^{-3}$ for ActivityNet. We train for 9, 23, 19 and 15 for Charades, EPIC-KITCHEN \emph{verb}, EPIC-KITCHEN \emph{noun}, ActivityNet (including a warmup of 5, 5, 5 and 10 epochs).

For ActivityNet, the number of bins $B$ of the Trident-head is set to 12, the convolution window $w$ is set to 15 and the scale factor $k$ is set to 1.3. For Charades, and EPIC-KITCHEN, the number of bins $B$ of the Trident-head is set to 16,  the convolution window $w$ is set to 1, and the scale factor $k$ is set to 5 for Charades and 1.5 for EPIC-KITCHEN.

We show their results in \tabref{table:epic}, \tabref{table:activitynet} and \tabref{table:chrades}, respectively.
With the same backbone network, TriDet achieves State-of-the-art performance on these datasets, demonstrating its robustness.


\bibliographystyle{spbasic}      
\bibliography{bib}

%
%

\end{document}